\newtheorem{theorem} {Theorem}
\newtheorem{lemma} {Lemma}
\newtheorem{definition} {Definition}
\newcommand{\alpharegret}{\alpha-\textrm{regret}}
\def\q{{\mathbf{q}}}
\def\x{{\mathbf{x}}}
\def\f{{\mathbf{f}}}
\def\s{{\mathbf{s}}}
\def\c{{\mathbf{c}}}
\def\h{{\mathbf{h}}}
\def\u{{\mathbf{u}}}
\def\v{{\mathbf{v}}}
\def\z{{\mathbf{z}}}
\def\w{{\mathbf{w}}}
\def\y{{\mathbf{y}}}
\def\p{{\mathbf{p}}}
\def\Q{{\mathbf{Q}}}
\def\E{{\mathbb{E}}}
\newcommand{\reals}{\mathbb{R}}
\newcommand{\mK}{\mathcal{K}}
\newcommand{\mF}{\mathcal{F}}
\newcommand{\mS}{\mathcal{S}}
\newcommand{\ball}{\mathcal{B}}
\newcommand{\dist}{\textrm{dist}}
\newcommand{\BS}{$\beta$-\textrm{BS}}
\newcommand{\convK}{\textrm{CH}(\mathcal{K})}
\newcommand{\conv}{\textrm{CH}}
\newcommand{\oraclek}{\mathcal{O}_{\mK}}
\newcommand{\oraclekext}{\hat{\mathcal{O}}_{\mK}}
\title{Efficient Online Linear Optimization \\with Approximation Algorithms}
\author{Dan Garber \\
Technion - Israel Institute of Technology \\
{\small{dangar@technion.ac.il}}}
\date{}
\begin{document}

 \maketitle

\begin{abstract}
We revisit the problem of \textit{online linear optimization} in case the set of feasible actions is accessible through an approximated linear optimization oracle with a factor $\alpha$ multiplicative approximation guarantee. This setting is in particular interesting since it captures natural online extensions of well-studied \textit{offline} linear optimization problems which are NP-hard, yet admit efficient approximation algorithms. The goal here is to minimize the $\alpha$\textit{-regret} which is the natural extension of the standard \textit{regret} in \textit{online learning} to this setting.
  We present new  algorithms with significantly improved oracle complexity for both the full information and bandit variants of the problem. Mainly, for both variants, we present $\alpha$-regret bounds of $O(T^{-1/3})$, were $T$ is the number of prediction rounds, using only $O(\log{T})$ calls to the approximation oracle per iteration, on average. These are the first results to obtain both average oracle complexity of $O(\log{T})$ (or even poly-logarithmic in $T$) and $\alpha$-regret bound $O(T^{-c})$ for a constant $c>0$, for both variants. 
\end{abstract}

\section{Introduction}
In this paper we revisit the problem of \textit{Online Linear Optimization} (OLO)  \cite{KV05}, which is a specialized case of \textit{Online Convex Optimization} (OCO) \cite{Hazan16} with linear loss functions, in case the feasible set of actions is accessible through an oracle for approximated linear optimization with a multiplicative approximation error guarantee. In the standard setting of OLO, a decision maker is repeatedly required to choose an action, a vector in some fixed feasible set in $\reals^d$. After choosing his action, the decision maker incurs loss (or payoff) given by the inner product between his selected vector and a vector chosen by an adversary. This game between the decision maker and the adversary then repeats itself. In the \textit{full information} variant of the problem, after the decision maker receives his loss (payoff) on a certain round, he gets to observe the vector chosen by the adversary. In the \textit{bandit} version of the problem, the decision maker only observes his loss (payoff) and does not get to observe the adversary's vector. The standard goal of the decision maker in OLO is to minimize a quantity known as \textit{regret}, which measures the difference between the average loss of the decision maker on a game of $T$ consecutive rounds (where $T$ is fixed and known in advance), and the average loss of the best feasible action in hindsight (i.e., chosen with knowledge of all actions of the adversary throughout the $T$ rounds) (in case of payoffs this difference is reversed). The main concern when designing algorithms for choosing the actions of the decision maker, is guaranteeing that the regret goes to zero as the length of the game $T$ increases, as fast as possible (i.e., the rate of the regret in terms of $T$). It should be noted that in this paper we focus on the case in which the adversary is \textit{oblivious} (a.k.a. \textit{non-adaptive}), which means the adversary chooses his entire sequence of actions for the $T$ rounds beforehand. 

While there exist well known algorithms for choosing the decision maker's actions which guarantee optimal regret bounds in $T$, such as the celebrated \textit{Follow the Perturbed Leader} (FPL) and \textit{Online Gradient Descent} (OGD) algorithms \cite{KV05, Zinkevich03, Hazan16}, efficient implementation of these algorithms hinges on the ability to efficiently solve certain convex optimization problems (e.g., linear minimization for  FPL or Euclidean projection for OGD) over the feasible set (or the convex hull of feasible points). However, when the feasible set corresponds for instance to the set of all possible solutions to some NP-Hard optimization problem, no such efficient implementations are known (or even widely believed to exist), and thus these celebrated regret-minimizing procedures cannot be efficiently applied. 
Luckily, many NP-Hard linear optimization problems (i.e., the objective function to either minimize or maximize is linear) admit efficient approximation algorithms with a multiplicative approximation guarantee. Some examples include \textsc{MAX-CUT} (factor $0.87856$ approximation due to \cite{Goemans95}) , \textsc{Metric TSP} (factor $1.5$ approximation due to \cite{Christofides76}), \textsc{Minimum Weighted Vertex Cover} (factor $2$ approximation \cite{Bar81}), and \textsc{Weighted Set Cover} (factor $(\log{}n+1)$ approximation due to \cite{Chvatal79}). It is thus natural to ask wether an efficient factor $\alpha$ approximation algorithm for an NP-Hard \textit{offline} linear optimization problem could be used to construct, in a generic way, an efficient algorithm for the \textit{online} version of the problem. Note that in this case, even efficiently computing the best fixed action in hindsight is not possible, and thus, minimizing regret via an efficient algorithm does not seem likely (given an approximation algorithm we can however compute in hindsight a decision that corresponds to at most (at least) $\alpha$ times the average loss (payoff) of the best fixed decision in hindsight). 

In their paper \cite{KKL}, Kakade, Kalai and Ligett were the first to address this question in a fully generic way. They showed that using only an $\alpha$-approximation oracle for the set of feasible actions, it is possible, at a high level, to construct an online algorithm which achieves vanishing (expected) $\alpha$-regret, which is the difference between the average loss of the decision maker and $\alpha$ times the average loss of the best fixed point in hindsight (for loss minimization problems and $\alpha \geq 1$; a corresponding definition exists for payoff maximization problems and $\alpha < 1$). Concretely, \cite{KKL} showed that one can guarantee $O(T^{-1/2})$ expected $\alpha$-regret in the full-information setting, which is optimal, and $O(T^{-1/3})$ in the bandit setting under the additional assumption of the availability of a \textit{Barycentric Spanner} (which we discuss in the sequel).

While the algorithm in \cite{KKL} achieves an optimal $\alpha$-regret bound (in terms of $T$) for the full information setting, in terms of computational complexity, the algorithm requires, in worst case, to perform on each round $O(T)$ calls to the approximation oracle, which might be prohibitive and render the algorithm inefficient, since as discussed, in general, $T$ is assumed to grow to infinity and thus the dependence of the runtime on $T$ is of primary interest. Similarly, their algorithm for the bandit setting requires $O(T^{2/3})$ calls to the approximation oracle per iteration.

The main contribution of our work is in providing new low $\alpha$-regret algorithms for the full information and bandit settings with significantly improved oracle complexities. A detailed comparison with \cite{KKL} is given in Table \ref{table:results}.
Concretely, for the full-information setting, we show it is possible to achieve $O(T^{-1/3})$ expected $\alpha$-regret using only $O(\log(T))$ calls to the approximation oracle per iteration, on average, which significantly improves over the $O(T)$ bound of \cite{KKL}\footnote{as we show in the sequel, even if we relax the algorithm of \cite{KKL} to only guarantee $O(T^{-1/3})$ $\alpha$-regret, it will still require $O(T^{2/3})$ calls to the oracle per iteration, on average.}.
We also show a bound of $O(T^{-1/2})$ on the expected $\alpha$-regret (which is optimal) using only $O(\sqrt{T}\log(T))$ calls to the oracle per iteration, on average, which gives nearly quadratic improvement over \cite{KKL}. In the bandit setting we show it is possible to obtain a $O(T^{-1/3})$ bound on the expected $\alpha$-regret (same as in \cite{KKL}) using only $O(\log(T))$ calls to the oracle per iteration, on average, under the same assumption on the availability of a \textit{Barycentric Spanner} (BS). It is important to note that while there exist algorithms for OLO with bandit feedback which guarantee $\tilde{O}(T^{-1/2})$ expected regret \cite{Abernethy08, Hazan14vol} (where the $\tilde{O}(\cdot)$ hides poly-logarithmic factors in $T$), these require on each iteration to either solve to arbitrarily small accuracy a convex optimization problem over the feasible set \cite{Abernethy08}, or sample a point from the feasible set according to a specified distribution 
\cite{ Hazan14vol}, both of which cannot be implemented efficiently in our setting. On the other-hand, as we formally show in the sequel, at a high level, using a BS (originally introduced in \cite{Awerbuch04}) simply requires to find a single set of $d$ points from the feasible set which span the entire space $\reals^d$ (assuming this is possible, otherwise the set could be mapped to a lower dimensional space) and store them in memory. The process of finding these vectors can be viewed as a preprocessing step and thus can be carried out offline. Moreover, as discussed in \cite{KKL}, for many NP-Hard problems it is possible to compute a BS in polynomial time and thus even this preprocessing step is efficient. Importantly, \cite{KKL} shows that the approximation oracle by itself is not strong enough to guarantee non-trivial $\alpha$-regret in the bandit setting, and hence this assumption on the availability of a BS seems reasonable. Since the best general regret bound known using a  BS is $O(T^{-1/3})$, the $\alpha$-regret bound of our bandit algorithm is the best achievable to date via an efficient algorithm.

Technically, the main challenge in the considered setting is that as discussed, we cannot optimize over the feasible set (or its convex hull) and thus cannot readily apply standard tools such as FPL and OGD. In \cite{KKL} it was shown that despite this fact, it is possible to use the approximation oracle and the  OGD method to generate two sequences of points such that one sequence, which is the output of OGD, while being \textit{infeasible}, still achieves low $\alpha$-regret with respect to the feasible set of actions. The second sequence of points, \textit{is feasible} and dominates (point wise) the sequence of OGD for every relevant linear loss (payoff) function. Thus, playing the second feasible sequence guarantees low $\alpha$-regret. The projection step of OGD is replaced in their work with an iterative algorithm, which at a high-level is based on an approach very similar to the classical Frank-Wolfe algorithm for convex optimization, which finds an \textit{infeasible} point, but one that both satisfies the projection property required by OGD and is dominated by a convex combination of feasible points for every relevant linear loss (payoff) function. Unfortunately, as we show in the sequel, in worst case, the number of queries to the approximation oracle, required by this so-called projection algorithm per iteration, is linear in $T$ (or $T^{2/3}$ in the bandit setting), which does not seem improvable with this approach. In this work, while our online algorithms are also based on an application of OGD, our approach to computing the so-called projections is drastically different, and is based on a coupling of two \textit{cutting plane methods}, one that is based on the Ellipsoid method, and the other that resembles Gradient Descent. This approach might be of independent interest and might prove useful to other problems in which it is reasonable to assume that the feasible set is accessed through an approximated linear optimization oracle with a multiplicative approximation guarantee.

\begin{table*}[h!]\renewcommand{\arraystretch}{1.3}
{\footnotesize
\begin{center}
  \begin{tabular}{| l | c | c | c | c |}    \hline
      & \multicolumn{2}{|c|}{full information} &\multicolumn{2}{|c|}{bandit information}\\ \hline
    Reference &  $\alpharegret$ & oracle complexity &  $\alpharegret$ & oracle complexity \\ \hline 
    KKL \cite{KKL} & $ T^{-1/2}$  & $T$  & $ T^{-1/3}$  & $T^{2/3}$\\ \hline    
    This paper  (Thm. \ref{thm:newOGD}, \ref{thm:newBandit}) & $ T^{-1/3}$  & $\log(T)$ & $ T^{-1/3}$  & $\log(T)$ \\ \hline    
    This paper  (Thm. \ref{thm:newOGD}) & $ T^{-1/2}$ & $\sqrt{T}\log(T)$ & - & - \\ \hline
  \end{tabular}
  \caption{comparison of expected $\alpharegret$ bounds and average number of calls to the approximation oracle per iteration. In all bounds we give only the dependence on the length of the game $T$ and omit all other dependencies which we treat as constants. In the bandit setting we report the \textit{expected} number of calls to the oracle per iteration.
  }\label{table:results}
\end{center}
}
\vskip -0.2in
\end{table*}\renewcommand{\arraystretch}{1}


\subsection{Additional related work}
Kalai and Vempala \cite{KV05} showed that approximation algorithms which have \textit{point-wise approximation guarantee}, such as the celebrated \textsc{MAX-CUT} algorithm of \cite{Goemans95}, could be used to instantiate their \textit{Follow the Perturbed Leader} framework to achieve low $\alpha$-regret. However this construction is far from generic and requires the oracle to satisfy additional non-trivial conditions. This approach was also used in \cite{Balcan06}. In \cite{KV05} it was also shown that FPL could be instantiated with a FPTAS to achieve low $\alpha$-regret, however the approximation factor in the FPTAS needs to be set to roughly $(1+O(T^{-1/2}))$, which may result in prohibitive running times even if a FPTAS for the underlying problem is available.
Similarly, in \cite{Fujita} it was shown that if the approximation algorithm is based on solving a convex relaxation of the original, possibly NP-Hard, problem, this additional structure can be used with the FPL framework to achieve low $\alpha$-regret efficiently. To conclude all of the latter works consider specialized cases in which the approximation oracle satisfies additional non-trivial assumptions beyond its approximation guarantee, whereas here, similarly to \cite{KKL}, we will be interested in a generic as possible conversion from the offline problem to the online one, without imposing additional structure on the offline oracle.

\subsection{Organization of the paper}
The rest of this paper is organized as follows. 
In Section \ref{sec:prelim} we give a formal description of our setting, including the full-information and bandit variants, present basic algorithmic tools that were used in previous work and will be used in our online algorithms as-well, and discuss in more detail the previous work of \cite{KKL}. In Section \ref{sec:project} we detail our main technical contribution and the algorithmic basis to our efficient online algorithms - an oracle-efficient algorithm for computing (infeasible) projections onto a convex set using an approximation oracle. Finally, in Section \ref{sec:algorithms} we present our online algorithms for the full-information and bandit settings and give formal guarantees on their regret bounds and oracle complexities.

\section{Preliminaries}\label{sec:prelim}

\subsection{Online linear optimization with approximation oracles}
Let $\mK, \mF$ be compact sets of points in $\reals^d_+$ (non-negative orthant in $\reals^d$) such that $\max_{\x\in\mK}\Vert{\x}\Vert \leq R, \max_{\f\in\mF}\Vert{\f}\Vert \leq F$, for some $R > 0, F>0$ (throughout this work we let $\Vert\cdot\Vert$ denote the standard Euclidean norm), and for all $\x\in\mK, \f\in\mF$ it holds that $C \geq \x\cdot \f \geq 0$, for some $C >0$.

We assume $\mK$ is  accessible through an approximated linear optimization oracle $\oraclek:\reals^d_+\rightarrow\mK$ with parameter $\alpha > 0$ such that:
\begin{eqnarray*}
\forall \c\in\reals^d_+: \qquad \oraclek(\c)\in\mK \quad \textrm{and} \quad 
\left\{ \begin{array}{ll}
          \oraclek(\c)\cdot \c \leq \alpha\min_{\x\in\mK}\x\cdot \c & \mbox{if $\alpha \geq 1$} ;\\ 
         \oraclek(\c)\cdot \c \geq \alpha\max_{\x\in\mK}\x\cdot \c & \mbox{if $\alpha < 1$}.\end{array} \right. 
\end{eqnarray*}
Here $\mK$ is the feasible set of actions for the player, and $\mF$ is the set of all possible loss/payoff vectors. We note that both of our assumptions that $\mK\subset\reals^d_+, \mF\subset\reals^d_+$ and that the oracle takes inputs from $\reals^d_+$ are made for ease of presentation and clarity, and since these naturally hold for many NP-Hard optimization problem that are relevant to our setting. Nevertheless, these assumptions could be easily generalized as done in \cite{KKL}.

Since naturally a factor $\alpha > 1$ for the approximation oracle is reasonable only for loss minimization problems, and a value $\alpha < 1$ is reasonable for payoff maximization problems, throughout this work it will be convenient to use the value of $\alpha$ to differentiate between minimization problems and maximization problems.

Given a sequence of linear loss/payoff functions $\{\f_1,...,\f_T\}\in\mF^T$ and a sequence of feasible points $\{\x_1,....,\x_T\}\in\mK^T$, we define the $\alpharegret$ of the sequence $\{\x_t\}_{t\in[T]}$ with respect to the sequence $\{\f_t\}_{t\in[T]}$ as
\begin{eqnarray}\label{eq:alphaRegret}
\alpharegret(\{(\x_t,\f_t)\}_{t\in[T]}) := \left\{ \begin{array}{ll}
          \frac{1}{T}\sum_{t=1}^T\x_t \cdot \f_t - \alpha\cdot\min_{\x\in\mK}\frac{1}{T}\sum_{t=1}^T\x\cdot \f_t & \mbox{if $\alpha \geq 1$} ;\\ \\
          \alpha\cdot\max_{\x\in\mK}\frac{1}{T}\sum_{t=1}^T\x\cdot \f_t - \frac{1}{T}\sum_{t=1}^T\x_t \cdot \f_t& \mbox{if $\alpha < 1$}.\end{array} \right. 
\end{eqnarray}
When the sequences $\{\x_t\}_{t\in[T]},\{\f_t\}_{t\in[T]}$ are obvious from context we will simply write $\alpharegret$ without stating these sequences. Also, when the sequence $\{\x_t\}_{t\in[T]}$ is randomized we will use $\E[\alpharegret]$ to denote the expected $\alpha$-regret.

\subsubsection{Online linear optimization with full information}
In OLO with full information, we consider a repeated game of $T$ prediction rounds, for a fixed $T$, where on each round $t$, the decision maker is required to choose a feasible action $\x_t\in\mK$. After committing to his choice, a linear loss function $\f_t\in\mF$ is revealed, and the decision maker incurs loss of $\x_t\cdot\f_t$. In the payoff version, the decision maker incurs payoff of $\x_t\cdot\f_t$.
The game then continues to the next round.
The overall goal of the decision maker is to guarantee that $\alpharegret(\{(\x_t,\f_t)\}_{t\in[T]}) = O(T^{-c})$ for some $c >0$, at least in expectation (in fact using randomization is mandatory since $\mK$ need not be convex). 
Here we assume that the adversary is \textit{oblivious} (aka \textit{non-adaptive}), i.e., the sequence of losses/payoffs $\f_1,...,\f_T$ is chosen in advance (before the first round), and does not depend on the actions of the decision maker.

\subsubsection{Bandit feedback}
The bandit version of the problem is identical to the full information setting with one crucial difference: on each round $t$, after making his choice, the decision maker does not observe the vector $\f_t$, but only the value of his loss/payoff, given by $\x_t\cdot\f_t$. The goal is again to guarantee expected $\alpha$-regret that vanishes as $T$ grows to infinity.

\subsection{Additional notation}
For any two sets $\mS,\mK\subset\reals^d$ and a scalar $\beta\in\reals$ we define the sets
\begin{eqnarray*}
\mS+\mK &:=& \{\x +\y ~ | ~ \x\in\mS,~\y\in\mK\}, \qquad \beta\mS := \{\beta\x ~ | ~ \x\in\mS\}.
\end{eqnarray*}
We also denote by $\convK$ the convex-hull of all points in a set $\mK$.

For a convex and compact set $\mS\subset\reals^d$ and a point $\x\in\reals^d$ we define
\begin{eqnarray*}
\dist(\x,\mS) := \min_{\z\in\mS}\Vert{\z-\x}\Vert.
\end{eqnarray*} 
We let $\ball(\c,r)$ denote the Euclidean ball or radius $r$ centered in $\c$.

\subsection{Basic algorithmic tools and the KKL approach}
We now briefly describe two very basic ideas that are essential for constructing our algorithms, namely the \textit{extended approximation oracle} and the \textit{online gradient descent without feasibility} method. These were already suggested in \cite{KKL} to obtain their low $\alpha$-regret algorithms. We then briefly describe the approach of \cite{KKL} and discuss its shortcomings in obtaining oracle-efficient algorithms.

\subsubsection{The extended approximation oracle}\label{sec:extended_oracle}

As discussed, a key difficulty of our setting that prevents us from directly applying well studied algorithms for OLO, is that essentially all standard algorithms require to exactly solve (or up to arbitrarily small error) some linear/convex optimization problem over the convexification of the feasible set $\conv(\mK)$. However, not only that our approximation oracle $\oraclek(\cdot)$ cannot perform exact minimization, even for $\alpha = 1$ it is applicable only with inputs in $\reals^d_+$, and hence cannot optimize in all directions.

A natural approach, suggested in \cite{KKL}, to overcome the approximation error of the oracle $\oraclek(\cdot)$, is to consider optimization with respect to the convex set $\conv(\alpha\mK)$ (i.e. convex hull of all points in $\mK$ scaled by a factor of $\alpha$) instead of $\conv(\mK)$. Indeed, if we consider for instance the case $\alpha \geq 1$, it is straightforward to see that for any $\c\in\reals^d_+$,
\begin{eqnarray*}
\oraclek(\c)\cdot \c \leq \alpha\min_{\x\in\mK}\x\cdot\c = \alpha\min_{\x\in\conv(\mK)}\x\cdot\c = \min_{\x\in\conv(\alpha\mK)}\x\cdot\c.
\end{eqnarray*}
Thus, in a certain sense, $\oraclek(\cdot)$ can optimize with respect to $\conv(\alpha\mK)$ for all directions in $\reals^d_+$, although the oracle returns points in the original set $\mK$.

The following lemma shows that one can easily extend the oracle $\oraclek(\cdot)$ to optimize with respect to all directions in $\reals^d$. The extended approximation oracle described in the lemma forms the basis for both the algorithms in \cite{KKL} and the algorithms considered here.

\begin{lemma}[Extended approximation oracle]\label{lem:oracleExtand}
Given $\c\in\mathbb{R}^d$ write $\c = \c^+ + \c^-$ where $\c^+$ equals to $\c$ on all non-negative coordinates of $\c$ and zero everywhere else, and $\c^-$ equals $\c$ on all negative coordinates and zero everywhere else.
The \textit{extended approximation oracle} is a mapping $\oraclekext: \reals^d \rightarrow \left({\mK + \ball(0,(1+\alpha)R), ~ \mK}\right)$ defined as:
\begin{eqnarray}
\oraclekext(\c) = (\v,\s) := \left\{ \begin{array}{ll}
          \left({\oraclek(\c^+) - \alpha{}R\bar{\c}^-, ~ \oraclek(\c^+)}\right) & \mbox{if $\alpha \geq 1$} ;\\ \\
         \left({\oraclek(-\c^-) - R\bar{\c}^+, ~ \oraclek(-\c^-)}\right) & \mbox{if $\alpha < 1$},\end{array} \right. 
\end{eqnarray}
where for any vector $\v\in\reals^d$ we denote
\begin{eqnarray*}
\bar{\v} := \left\{ \begin{array}{ll}
          \v/\Vert{\v}\Vert & \mbox{if $\Vert{\v}\Vert > 0$} ;\\ \\
          \mathbf{0} & \mbox{if $\Vert{\v}\Vert = 0$},\end{array} \right.
\end{eqnarray*}
and it satisfies the following three properties:
\begin{enumerate}
\item $\v\cdot \c \leq \min_{\x\in\alpha\mK}\x\cdot \c$
\item $\forall{\f}\in\mF$: $\s \cdot \f \leq \v \cdot \f $ if $\alpha \geq 1$ and $\s \cdot \f \geq \v \cdot \f $ if $\alpha < 1$
\item $\Vert{\v}\Vert \leq (\alpha+2)R$
\end{enumerate}
\end{lemma}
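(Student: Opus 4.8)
The plan is to verify the three properties directly from the definition of $\oraclekext$, handling the cases $\alpha\ge 1$ and $\alpha<1$ separately but symmetrically; I would present $\alpha\ge 1$ in full, where $\v=\oraclek(\c^+)-\alpha R\bar{\c}^-$ and $\s=\oraclek(\c^+)$, and obtain $\alpha<1$ by the mirror-image argument. The recurring computational device is the decomposition $\c=\c^++\c^-$ into disjointly supported parts: since $\bar{\c}^-$ is supported on the negative coordinates, $\bar{\c}^-\cdot\c^+=0$ and $\bar{\c}^-\cdot\c^-=\Vert{\c^-}\Vert$, so the penalty term contributes exactly $\alpha R\,\bar{\c}^-\cdot\c=\alpha R\Vert{\c^-}\Vert$. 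Throughout I would use that every element of $\mK$ and $\mF$ lies in $\reals^d_+$ with Euclidean norm at most $R$, together with the oracle guarantee applied to the non-negative input $\c^+$.

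Properties 2 and 3 are immediate and I would dispatch them first. For Property 2, writing $\v\cdot\f=\s\cdot\f-\alpha R\,\bar{\c}^-\cdot\f$ and noting that $\bar{\c}^-$ has only non-positive entries while $\f\in\reals^d_+$, we get $\bar{\c}^-\cdot\f\le 0$, hence $\v\cdot\f\ge\s\cdot\f$, which is the claim for $\alpha\ge 1$; the reversed inequality for $\alpha<1$ follows identically from $\bar{\c}^+\cdot\f\ge 0$. For Property 3, the triangle inequality gives $\Vert{\v}\Vert\le\Vert{\oraclek(\c^+)}\Vert+\alpha R\Vert{\bar{\c}^-}\Vert\le R+\alpha R=(1+\alpha)R\le(\alpha+2)R$, and the parallel bound for $\alpha<1$ is $2R\le(\alpha+2)R$.

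The substance lies in Property 1, which is where I expect the only real work. Expanding, $\v\cdot\c=\oraclek(\c^+)\cdot\c^++\oraclek(\c^+)\cdot\c^--\alpha R\Vert{\c^-}\Vert$. The first term is controlled by the oracle, $\oraclek(\c^+)\cdot\c^+\le\alpha\min_{\x\in\mK}\x\cdot\c^+$, and the second is $\le 0$ since $\oraclek(\c^+)\in\reals^d_+$ and $\c^-\le 0$ coordinatewise, giving $\v\cdot\c\le\alpha\min_{\x\in\mK}\x\cdot\c^+-\alpha R\Vert{\c^-}\Vert$. To finish I must compare this with $\min_{\x\in\alpha\mK}\x\cdot\c=\alpha\min_{\x\in\mK}\x\cdot\c$. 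Taking $\x^\star\in\arg\min_{\x\in\mK}\x\cdot\c$ and splitting $\x^\star\cdot\c=\x^\star\cdot\c^++\x^\star\cdot\c^-$, I would lower-bound the first piece by $\min_{\x\in\mK}\x\cdot\c^+$ and the second by $-R\Vert{\c^-}\Vert$ via Cauchy--Schwarz and $\Vert{\x^\star}\Vert\le R$; this yields $\alpha\min_{\x\in\mK}\x\cdot\c\ge\alpha\min_{\x\in\mK}\x\cdot\c^+-\alpha R\Vert{\c^-}\Vert\ge\v\cdot\c$, as required.

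The main obstacle is making this last comparison tight: the oracle only certifies optimality against $\c^+$, so the penalty $-\alpha R\bar{\c}^-$ must be exactly large enough to absorb the worst-case misalignment of the true minimizer $\x^\star$ along the negative directions $\c^-$. The clean observation that closes the gap is that both the forfeited benefit and the uncontrolled cross term $\x^\star\cdot\c^-$ are bounded in magnitude by $R\Vert{\c^-}\Vert$ (Cauchy--Schwarz plus the radius bound), so a penalty of magnitude $\alpha R\Vert{\c^-}\Vert$ suffices. For $\alpha<1$ the identical bookkeeping applies to the non-negative input $-\c^-\in\reals^d_+$, using the oracle's maximization guarantee $\oraclek(-\c^-)\cdot(-\c^-)\ge\alpha\max_{\x\in\mK}\x\cdot(-\c^-)$, with the penalty $-R\bar{\c}^+$ now absorbing the cross term along $\c^+$.
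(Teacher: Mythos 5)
Your proof is correct and follows essentially the same route as the paper's: the decomposition $\c=\c^++\c^-$ with disjoint supports, the oracle guarantee applied to $\c^+$, the sign argument $\oraclek(\c^+)\cdot\c^-\le 0$, and Cauchy--Schwarz with $\Vert\x\Vert\le R$ to show the penalty $\alpha R\Vert\c^-\Vert$ absorbs the cross term. The only cosmetic difference is that you phrase the final comparison through an explicit minimizer $\x^\star$ where the paper uses $-\alpha R\Vert\c^-\Vert\le\alpha\min_{\x\in\mK}\x\cdot\c^-$ and superadditivity of the minimum; these are the same estimate.
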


\begin{proof}
For the first item in the lemma note that for $\alpha \geq 1$ it holds that
\begin{eqnarray*}
\v \cdot \c &=& \oraclek(\c^+) \cdot (\c^+ + \c^-) - \alpha{}R\bar{\c}^- \cdot (\c^+ + \c^-) \\
&\leq & \alpha\min_{\x\in\mK}\x \cdot \c^+ +\oraclek(\c^+)\cdot\c^-  - \alpha{}R\Vert{\c^-}\Vert \\
&\leq & \alpha\min_{\x\in\mK}\x \cdot \c^+ - \alpha{}R\Vert{\c^-}\Vert \\
&\leq & \alpha\min_{\x\in\mK}\x \cdot \c^+ + \alpha{}\min_{\x\in\mK}\x \cdot \c^- \\
&\leq & \alpha\min_{\x\in\mK}\x \cdot \c  = \min_{\x\in\alpha\mK}\x \cdot \c,
\end{eqnarray*}

Similarly, for $\alpha > 1$ we have that
\begin{eqnarray*}
\v \cdot \c &=& \oraclek(-\c^-) \cdot (\c^+ + \c^-) - R}{\bar{\c}^+ \cdot (\c^+ + \c^-) \\
&\leq & -\alpha\max_{\x\in\mK}\x \cdot (-\c^-) +\oraclek(-\c^-)\cdot\c^+  - R\Vert{\c^+}\Vert \\
&\leq & \alpha\min_{\x\in\mK}\x \cdot \c^- +R\Vert{\c^+}\Vert  - R\Vert{\c^+}\Vert \\
&\leq & \alpha\min_{\x\in\mK}\x \cdot \c^-  + \alpha\min_{\x\in\mK}\x \cdot \c^+ \\
&\leq & \alpha\min_{\x\in\mK}\x \cdot \c  = \min_{\x\in\alpha\mK}\x \cdot \c.
\end{eqnarray*}
For the second item, it suffices to observe that for $\alpha \geq 1$ we have that $\s \leq \v$ (coordinate-wise) and hence for every $\f\in\mF$ we have that $\s\cdot\f \leq \v\cdot \f$ (recall that $\mF\subset\reals^d_+$). Similarly, when $\alpha < 1$, we note that $\s \geq \v$. 

The third item holds trivially.
\end{proof}

It is important to note that while the extended oracle provides solutions with values at least as low as any point in $\conv(\alpha\mK)$, still in general the output point $\v$ need not be in either $\mK$ or $\conv(\alpha\mK)$, which means that it is not a feasible point to play in our OLO setting, nor does it allow us to optimize over $\conv(\alpha\mK)$. This is why we also need the oracle to output the feasible point $\s\in\mK$ which dominates $\v$ for any possible loss/payoff vector in $\mF$. While we will use the outputs $\v$ to solve a certain optimization problem involving $\conv(\alpha\mK)$, this dominance relation will be used to convert the solutions to these optimization problems into feasible plays for our OLO algorithms.

\subsubsection{Online gradient descent with and without feasibility}

As in \cite{KKL}, our online algorithms will be based on the well known \textit{Online Gradient Descent} method (OGD) for online convex optimization, originally due to \cite{Zinkevich03}. For a sequence of loss vectors $\{\f_1,...,\f_T\}\subset\reals^d$ OGD produces a sequence of plays $\{\x_1,...,\x_T\}\subset\mS$, for a convex and compact set $\mS\subset\reals^d$ via the following updates:
\begin{eqnarray*}
\forall t\geq 1: \qquad \y_{t+1}\gets \x_t - \eta\f_t, \quad \x_{t+1}\gets\arg\min_{\x\in\mS}\Vert{\x-\y_{t+1}}\Vert^2,
\end{eqnarray*}
where $\x_1$ is initialized to some arbitrary point in $\mS$ and $\eta$ is some pre-determined step-size. The obvious difficulty in applying OGD to online linear optimization over $\mS = \conv(\alpha\mK)$ is the step of computing $\x_{t+1}$ by projecting $\y_{t+1}$ onto the feasible set $\mS$, since as discussed, even with the extended approximation oracle, one cannot exactly optimize over $\conv(\alpha\mK)$. 
Instead we will consider a variant of OGD which may produce infeasible points, i.e., outside of $\mS$, but which guarantees low regret with respect to any point in $\mS$. This algorithm, which we refer to as \textit{online gradient descent without feasibility}, is given below (Algorithm \ref{alg:OGDinf}).

\begin{algorithm}[H]
\caption{Online Gradient Descent Without Feasibility}
\label{alg:OGDinf}
\begin{algorithmic}[1]
\STATE input: learning rate $\eta > 0$
\STATE $\x_1 \gets $ some point in $\mS$
\FOR{$t=1\dots T$}
\STATE play $\x_t$
\STATE receive loss/payoff vector $\f_t\in\reals^d$
\STATE $\y_{t+1} \gets \left\{ \begin{array}{ll}
         \x_t - \eta{}\f_t & \mbox{for losses} \\ 
          \x_t + \eta{}\f_t& \mbox{for payoffs}\end{array} \right. $
\STATE find $\x_{t+1}\in\reals^d$ such that 
\begin{eqnarray}\label{eq:infProj}
\forall \z\in\mS: \quad \Vert{\z-\x_{t+1}}\Vert^2 \leq \Vert{\z-\y_{t+1}}\Vert^2
\end{eqnarray}
(we say $\x_{t+1}$ is the infeasible projection of $\y_{t+1}$ onto the feasible set $\mS$)
\ENDFOR
\end{algorithmic}
\end{algorithm}

\begin{lemma}\label{lem:ogdRegret}[Online gradient descent without feasibility]
Fix $\eta > 0$. Suppose Algorithm \ref{alg:OGDinf} is applied for $T$ rounds and let $\{\f_t\}_{t=1}^T\subset\reals^d$ be the sequence of observed loss/payoff vectors, and let $\{\x_t\}_{t=1}^T$ be the sequence of points played by the algorithm. 
Then for any $\x\in\mS$ it holds that
\begin{eqnarray*}
 \begin{array}{ll}
          \frac{1}{T}\sum_{t=1}^T\x_t \cdot \f_t - \frac{1}{T}\sum_{t=1}^T\x\cdot \f_t \leq \frac{1}{2T\eta}\Vert{\x_1-\x}\Vert^2 + \frac{\eta{}}{2T}\sum_{t=1}^T\Vert{\f_t}\Vert^2 & \mbox{for losses} ;\\ \\
          \frac{1}{T}\sum_{t=1}^T\x\cdot \f_t - \frac{1}{T}\sum_{t=1}^T\x_t \cdot \f_t \leq \frac{1}{2T\eta}\Vert{\x_1-\x}\Vert^2 + \frac{\eta{}}{2T}\sum_{t=1}^T\Vert{\f_t}\Vert^2& \mbox{for payoffs}.\end{array}
\end{eqnarray*}
\end{lemma}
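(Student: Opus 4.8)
The plan is to run the textbook potential argument for Online Gradient Descent, observing that the only structural fact about the projection step used in that argument is exactly the inequality (\ref{eq:infProj}) that defines the infeasible projection. I will treat the loss case in full; the payoff case is symmetric after flipping the sign in the update.

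Fix any $\x\in\mS$ and a round $t$. Starting from the defining property of the infeasible projection with $\z=\x$, I have $\Vert{\x_{t+1}-\x}\Vert^2\leq\Vert{\y_{t+1}-\x}\Vert^2$. Substituting the update $\y_{t+1}=\x_t-\eta\f_t$ and expanding the square gives
\[
\Vert{\x_{t+1}-\x}\Vert^2\leq\Vert{\x_t-\x}\Vert^2-2\eta\,\f_t\cdot(\x_t-\x)+\eta^2\Vert{\f_t}\Vert^2 .
\]
Rearranging isolates the instantaneous regret,
\[
\f_t\cdot(\x_t-\x)\leq\frac{1}{2\eta}\left({\Vert{\x_t-\x}\Vert^2-\Vert{\x_{t+1}-\x}\Vert^2}\right)+\frac{\eta}{2}\Vert{\f_t}\Vert^2 .
\]
Summing over $t=1,\dots,T$, the bracketed terms telescope and leave $\frac{1}{2\eta}\left({\Vert{\x_1-\x}\Vert^2-\Vert{\x_{T+1}-\x}\Vert^2}\right)$. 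Dropping the nonpositive term $-\frac{1}{2\eta}\Vert{\x_{T+1}-\x}\Vert^2$ and dividing by $T$ yields precisely the claimed loss bound. For payoffs I use $\y_{t+1}=\x_t+\eta\f_t$; the sign change turns the cross term into $+2\eta\,\f_t\cdot(\x_t-\x)$, so the same telescoping produces the identical right-hand side while the left-hand side becomes $\frac1T\sum_t\x\cdot\f_t-\frac1T\sum_t\x_t\cdot\f_t$.

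I do not expect a genuine obstacle here, since the computation is the classical OGD telescoping argument. The one point worth emphasizing is conceptual rather than technical: feasibility of the iterates $\x_{t+1}$ is never invoked. The standard analysis only ever needs the inequality $\Vert{\z-\x_{t+1}}\Vert^2\leq\Vert{\z-\y_{t+1}}\Vert^2$ for every $\z\in\mS$, which the true Euclidean projection onto the convex set $\mS$ satisfies by the obtuse-angle (Pythagorean) property, but which (\ref{eq:infProj}) now simply postulates. Consequently any procedure returning a (possibly infeasible) point satisfying (\ref{eq:infProj}) inherits the same regret guarantee, and this is exactly what makes the downstream oracle-based construction of such infeasible projections useful.
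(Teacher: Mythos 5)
Your proof is correct and is essentially identical to the paper's: both use the defining inequality \eqref{eq:infProj} with $\z=\x$, expand the square of the gradient step, rearrange, and telescope. No further comments are needed.
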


\begin{proof}
Fix $\x\in\mS$. Assume that the vectors $\f_1,...,\f_T$ are losses.
By the definition of the infeasible projection $\x_{t+1}$,
for any iteration $t\geq 1$ it holds that
\begin{eqnarray*}
\Vert{\x_{t+1}-\x}\Vert^2 &\leq & \Vert{\y_{t+1} - \x}\Vert^2 = \Vert{\x_t - \eta\f_t - \x}\Vert^2
\\
&=& \Vert{\x_t - \x}\Vert^2 - 2\eta(\x_t-\x)\cdot \f_t + \eta^2\Vert{\f_t}\Vert^2
\end{eqnarray*}
Rearranging and summing over all iterations we have that
\begin{eqnarray*}
\sum_{t=1}^T(\x_t-\x)\cdot \f_t &\leq & \frac{1}{2\eta}\sum_{t=1}^T(\Vert{\x_t-\x}\Vert^2 - \Vert{\x_{t+1}-\x}\Vert^2) + \frac{\eta}{2}\sum_{t=1}^T\Vert{\f_t}\Vert^2 \\
& \leq & \frac{1}{2\eta}\Vert{\x_1-\x}\Vert^2 + \frac{\eta}{2}\sum_{t=1}^T\Vert{\f_t}\Vert^2 .
\end{eqnarray*}

It is immediate to see that the proof of the result in case of payoffs instead of losses (for which the only change is in the update of $\y_{t+1}$ in Algorithm \ref{alg:OGDinf}), follows the same lines as the one for losses given above.
\end{proof}

\subsubsection{The KKL approach}\label{sec:KKL}

We now briefly describe how \cite{KKL} use the extended approximation oracle and the online gradient descent without feasibility approach to construct their low $\alpha$-regret algorithm for the full information setting, and point out the limitation of this approach to obtaining low oracle complexity.

Consider some iteration $t$ of Algorithm \ref{alg:OGDinf} and let $\y_{t+1}$ be the newly computed point. Let $(\x,\s)\in\reals^d\times\mK$ be such that $\forall\f\in\mF: \x\cdot\f \geq \s\cdot\f$ (e.g., take $\x = \s$), and let $(\v',\s') \gets \oraclekext(\x-\y_{t+1})$. We have the following simple lemma, a proof of which is given in the appendix for completeness.

\begin{lemma}\label{lem:KKL:1}
Fix $\epsilon\in(0, 3(\alpha+2)^2R^2]$ and suppose that $\x\in\ball(0,(\alpha+2)R)$. If $(\x- \y_{t+1})\cdot(\x - \v') \leq \epsilon$, then setting $\x_{t+1} \gets \x$ gives
\begin{eqnarray*}
\forall \z\in\conv(\alpha\mK): \quad \Vert{\z-\x_{t+1}}\Vert^2 \leq \Vert{\z-\y_{t+1}}\Vert^2 + 2\epsilon.
\end{eqnarray*}
Otherwise, setting $\x' \gets (1-\lambda)\x + \lambda\v'$, for appropriately chosen $\lambda\in(0,1)$, guarantees that
\begin{eqnarray*}
\Vert{\x' - \y_{t+1}}\Vert^2 \leq \Vert{\x - \y_{t+1}}\Vert^2 - \Omega(\epsilon^2),
\end{eqnarray*}
and
\begin{eqnarray*}
\forall \f\in\mF: \qquad \left({(1-\lambda)\s + \lambda\s'}\right)\cdot\f \leq \x'\cdot\f.
\end{eqnarray*}
\end{lemma}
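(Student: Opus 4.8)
The plan is to treat this as a Frank–Wolfe-style progress lemma on the squared-distance potential $\Vert \x - \y_{t+1}\Vert^2$, using the extended oracle's first property to supply a good descent direction. I would set $\c := \x - \y_{t+1}$ and recall that $(\v',\s') = \oraclekext(\c)$, so by item 1 of Lemma \ref{lem:oracleExtand} we have $\v'\cdot\c \leq \min_{\z\in\alpha\mK}\z\cdot\c \leq \min_{\z\in\conv(\alpha\mK)}\z\cdot\c$ (the last equality since linear functions are minimized at extreme points). The whole dichotomy hinges on the scalar quantity $(\x-\y_{t+1})\cdot(\x-\v') = \c\cdot(\x-\v')$, which measures how far $\x$ is from satisfying the optimality condition for the projection onto $\conv(\alpha\mK)$.

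For the first case ($\c\cdot(\x-\v')\leq\epsilon$), I would verify the approximate-projection inequality directly. Fix any $\z\in\conv(\alpha\mK)$ and expand
\begin{eqnarray*}
\Vert\z-\x\Vert^2 - \Vert\z-\y_{t+1}\Vert^2 = -2(\z-\x)\cdot(\x-\y_{t+1}) - \Vert\x-\y_{t+1}\Vert^2 + 2(\x-\y_{t+1})\cdot(\x-\y_{t+1}) \ldots
\end{eqnarray*}
— more cleanly, $\Vert\z-\x\Vert^2 = \Vert\z-\y_{t+1}\Vert^2 + 2(\z-\y_{t+1})\cdot(\y_{t+1}-\x) + \Vert\y_{t+1}-\x\Vert^2$, and I would regroup $\y_{t+1}-\x = -\c$ to obtain $\Vert\z-\x\Vert^2 - \Vert\z-\y_{t+1}\Vert^2 = 2\c\cdot(\x-\z) - \Vert\c\Vert^2$. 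Then I split $\c\cdot(\x-\z) = \c\cdot(\x-\v') + \c\cdot(\v'-\z)$; the first term is $\leq\epsilon$ by hypothesis, and the second is $\leq 0$ because $\v'\cdot\c \leq \z\cdot\c$ for every $\z\in\conv(\alpha\mK)$. Dropping the $-\Vert\c\Vert^2$ term gives the claimed bound with slack $2\epsilon$.

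For the second case ($\c\cdot(\x-\v')>\epsilon$), the direction $\v'-\x$ is a strict descent direction for the potential, and I would perform an exact line search / optimal step. Writing $\x'=(1-\lambda)\x+\lambda\v'=\x+\lambda(\v'-\x)$ and expanding,
\begin{eqnarray*}
\Vert\x'-\y_{t+1}\Vert^2 = \Vert\x-\y_{t+1}\Vert^2 - 2\lambda\,\c\cdot(\x-\v') + \lambda^2\Vert\v'-\x\Vert^2,
\end{eqnarray*}
whose minimizer is $\lambda^\star = \c\cdot(\x-\v')/\Vert\v'-\x\Vert^2$, yielding a decrease of $(\c\cdot(\x-\v'))^2/\Vert\v'-\x\Vert^2 > \epsilon^2/\Vert\v'-\x\Vert^2$. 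The main obstacle — and the reason for the hypotheses $\x\in\ball(0,(\alpha+2)R)$ and $\epsilon\leq 3(\alpha+2)^2R^2$ — is twofold: I must certify that the optimal $\lambda^\star$ genuinely lies in $(0,1)$ (not just in $(0,\infty)$), and I must upper-bound $\Vert\v'-\x\Vert^2$ by a constant to convert the decrease into the stated $\Omega(\epsilon^2)$. Here I would use item 3 of Lemma \ref{lem:oracleExtand}, $\Vert\v'\Vert\leq(\alpha+2)R$, together with $\Vert\x\Vert\leq(\alpha+2)R$, to get $\Vert\v'-\x\Vert\leq 2(\alpha+2)R$, so $\Vert\v'-\x\Vert^2\leq 4(\alpha+2)^2R^2$; this bounds the decrease below by $\epsilon^2/(4(\alpha+2)^2R^2)=\Omega(\epsilon^2)$, and the constraint $\epsilon\leq 3(\alpha+2)^2R^2$ is exactly what forces $\lambda^\star<1$ (if the formula gives $\lambda^\star\geq1$ one simply takes $\lambda=1$, and the bound on $\Vert\v'-\x\Vert^2$ guarantees even more decrease). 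Finally, the dominance claim $((1-\lambda)\s+\lambda\s')\cdot\f\leq\x'\cdot\f$ for all $\f\in\mF$ follows coordinate-wise: by item 2 of Lemma \ref{lem:oracleExtand} we have $\s'\cdot\f\leq\v'\cdot\f$, by the hypothesis on $(\x,\s)$ we have $\s\cdot\f\leq\x\cdot\f$, and taking the convex combination with weights $(1-\lambda,\lambda)$ preserves the inequality since $\x'=(1-\lambda)\x+\lambda\v'$.
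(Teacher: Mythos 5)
Your proposal is correct and follows essentially the same route as the paper: expand the squared distance and use item 1 of Lemma \ref{lem:oracleExtand} for the first case, take a Frank--Wolfe step and bound $\Vert{\v'-\x}\Vert$ via item 3 for the second, and combine item 2 with convexity for the dominance claim. The only (immaterial) difference is that you use an exact line search for $\lambda$ capped at $1$, whereas the paper fixes $\lambda=\epsilon/(3(\alpha+2)^2R^2)$, which the hypothesis $\epsilon\leq 3(\alpha+2)^2R^2$ places in $(0,1]$; both yield the same $\Omega(\epsilon^2)$ decrease.
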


Note that Lemma \ref{lem:KKL:1} suggests an iterative algorithm to compute an $\epsilon$-approximated projection of $\y_{t+1}$ in Algorithm \ref{alg:OGDinf}, that on each iteration reduces the potential $\Vert{\x-\y_{t+1}}\Vert^2$ by $\Omega(\epsilon^2)$, until finding an $\epsilon$-approximated projection of $\y_{t+1}$, $\x_{t+1}$, which must be found since the potential in non-negative. Moreover, this algorithm finds a point $\bar{\s}_{t+1}\in\conv(\mK)$, given explicitly as a convex combination of points in $\mK$ (since $\lambda\in(0,1)$), such that $\bar{\s}_{t+1}$ dominates $\x_{t+1}$ for all vectors in $\mF$. In particular, sampling $\s_{t+1}$ from this decomposition guarantees that we play a feasible point in $\mK$, which in expectation, dominates $\x_{t+1}$ for all vectors in $\mF$. The full algorithm, which is closely related to the classical Frank-Wolfe algorithm for convex optimization (a.k.a. the conditional gradient method) \cite{Bubeck15}, is given for completeness in the appendix, see Algorithm \ref{alg:fw}\footnote{we note that it differs somewhat in presentation than the original algorithm in \cite{KKL}.}.

The proof of the following lemma is also given in the appendix for completeness.
\begin{lemma}\label{lem:KKL:2}
Fix $\epsilon\in(0, 3(\alpha+2)^2R^2]$, $\eta > 0$ and a sequence of loss functions $\{\f_1,...,\f_T\}\subseteq\mF$. Consider the application of Algorithm \ref{alg:OGDinf} with learning rate $\eta$ when applied with respect to the feasible set $\conv(\alpha\mK)$ and the sequence of losses $\{\f_1,...,\f_T\}\subseteq\mF$, and when we use the algorithm described above to produce the (randomized) sequence of points $\{(\x_t,\s_t\}_{t\in[T]}\subset\reals^d\times\mK$. Then, focusing on the case $\alpha \geq 1$, it holds that,
\begin{eqnarray*}
\E\left[{\frac{1}{T}\sum_{t=1}^T\s_t \cdot \f_t}\right] - \min_{\x\in\conv(\alpha\mK)}\frac{1}{T}\sum_{t=1}^T\x\cdot \f_t &=&
\E\left[{\frac{1}{T}\sum_{t=1}^T\s_t \cdot \f_t}\right] - \alpha\min_{\x\in\mK}\frac{1}{T}\sum_{t=1}^T\x\cdot \f_t \\
&\leq& \frac{\alpha^2R^2}{T\eta} + \frac{\eta{}F^2}{2}  + \frac{\epsilon}{\eta}.
\end{eqnarray*}
Moreover, the number of calls to the extended approximation oracle per iteration $t$ is $$O(\Vert{\y_{t+1}-\x_t}\Vert^2/\epsilon^2) = O(\eta^2F^2/\epsilon^2),$$where the $O(\cdot)$ notation hides polynomial dependencies on $(1+\alpha),R$.
\end{lemma}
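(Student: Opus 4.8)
The plan is to combine three ingredients: the approximate-projection guarantee of Lemma \ref{lem:KKL:1}, the telescoping computation underlying Lemma \ref{lem:ogdRegret}, and the dominance property that links the played feasible points $\s_t$ to the infeasible iterates $\x_t$. I would begin by observing that the iterates $\{\x_t\}$ produced by Algorithm \ref{alg:OGDinf} form a \emph{deterministic} sequence: each $\y_{t+1}=\x_t-\eta\f_t$ is determined by $\x_t$ and the oblivious loss $\f_t$, and the projection subroutine induced by Lemma \ref{lem:KKL:1} is itself deterministic, so the only randomness in the scheme is the sampling of $\s_t$ from the convex decomposition of $\bar\s_t$, and this randomness does not feed back into $\{\x_t\}$. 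This makes the subsequent expectation manipulation clean.

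For the regret bound, I would first redo the telescoping from the proof of Lemma \ref{lem:ogdRegret} using the \emph{approximate} infeasible projection guarantee from the first case of Lemma \ref{lem:KKL:1}, namely $\Vert{\z-\x_{t+1}}\Vert^2\leq\Vert{\z-\y_{t+1}}\Vert^2+2\epsilon$ for every $\z\in\conv(\alpha\mK)$. Expanding $\y_{t+1}=\x_t-\eta\f_t$ and rearranging gives, for any fixed $\x\in\conv(\alpha\mK)$,
\[
(\x_t-\x)\cdot\f_t \leq \tfrac{1}{2\eta}\left({\Vert{\x_t-\x}\Vert^2-\Vert{\x_{t+1}-\x}\Vert^2}\right)+\tfrac{\eta}{2}\Vert{\f_t}\Vert^2+\tfrac{\epsilon}{\eta},
\]
and summing over $t\in[T]$ telescopes the first term, leaving $\tfrac{1}{2\eta}\Vert{\x_1-\x}\Vert^2$ plus an extra $\tfrac{T\epsilon}{\eta}$ relative to the exact analysis. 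Choosing $\x$ to be the minimizer over $\conv(\alpha\mK)$, using $\min_{\x\in\conv(\alpha\mK)}\sum_t\x\cdot\f_t=\alpha\min_{\x\in\mK}\sum_t\x\cdot\f_t$ (linear objective, convex hull, scaling by $\alpha$), together with $\Vert{\x_1-\x}\Vert^2=O(\alpha^2R^2)$ since both points lie in $\conv(\alpha\mK)\subseteq\ball(0,\alpha R)$ and $\Vert{\f_t}\Vert\leq F$, yields the stated bound on $\tfrac1T\sum_t\x_t\cdot\f_t-\alpha\min_{\x\in\mK}\tfrac1T\sum_t\x\cdot\f_t$ (the diameter term supplying $\tfrac{\alpha^2R^2}{T\eta}$ up to the hidden constant). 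The final step passes from the infeasible $\x_t$ to the played $\s_t$: by item 2 of Lemma \ref{lem:oracleExtand}, propagated through the convex combination in Lemma \ref{lem:KKL:1}, the decomposition point $\bar\s_t$ satisfies $\bar\s_t\cdot\f\leq\x_t\cdot\f$ for all $\f\in\mF$ when $\alpha\geq1$, and since $\s_t$ is sampled so that $\E[\s_t]=\bar\s_t$ we obtain $\E[\s_t\cdot\f_t]=\bar\s_t\cdot\f_t\leq\x_t\cdot\f_t$. Taking expectations and summing gives $\E[\tfrac1T\sum_t\s_t\cdot\f_t]\leq\tfrac1T\sum_t\x_t\cdot\f_t$, which chains with the previous display to prove the claim.

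For the oracle complexity, I would run a potential argument on the projection subroutine at a fixed round $t$. The subroutine is initialized at the previous iterate $\x_t$, which lies in $\ball(0,(\alpha+2)R)$ and carries its dominating point $\bar\s_t$, so Lemma \ref{lem:KKL:1} applies; each oracle call either certifies the approximate projection and halts, or performs the Frank--Wolfe-type update $\x'\gets(1-\lambda)\x+\lambda\v'$, which by the second case of Lemma \ref{lem:KKL:1} decreases the potential $\Vert{\x-\y_{t+1}}\Vert^2$ by $\Omega(\epsilon^2)$. Since the potential is nonnegative and starts at $\Vert{\x_t-\y_{t+1}}\Vert^2=\eta^2\Vert{\f_t}\Vert^2\leq\eta^2F^2$, the number of updates, and hence of oracle calls, is $O(\Vert{\y_{t+1}-\x_t}\Vert^2/\epsilon^2)=O(\eta^2F^2/\epsilon^2)$.

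I expect the principal obstacle to be bookkeeping rather than conceptual: one must verify that the invariant $\x_t\in\ball(0,(\alpha+2)R)$ is maintained across rounds so that Lemma \ref{lem:KKL:1} is applicable at every iteration, and keep the dominance direction ($\leq$ versus $\geq$) consistent with the case $\alpha\geq1$ throughout. The $\alpha<1$ (payoff) case then follows symmetrically by reversing these inequalities, exactly as in Lemmas \ref{lem:oracleExtand} and \ref{lem:KKL:1}.
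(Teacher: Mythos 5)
Your proposal is correct and follows essentially the same route as the paper's own proof: the telescoping of Lemma \ref{lem:ogdRegret} with the extra $2\epsilon$ slack contributing $\epsilon/\eta$ per round, the passage from $\x_t$ to $\s_t$ via $\E[\s_t\cdot\f_t]=\bar\s_t\cdot\f_t\leq\x_t\cdot\f_t$, and the potential argument initialized at $\x_t$ with $\Vert\x_t-\y_{t+1}\Vert^2\leq\eta^2F^2$ for the oracle count. Your added observations (determinism of $\{\x_t\}$, the invariant $\x_t\in\ball(0,(\alpha+2)R)$ being preserved since each iterate is a convex combination of points in that ball) are correct bookkeeping that the paper leaves implicit.
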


The extra term of $\epsilon/\eta$ in the regret bound is due to fact we compute $\epsilon$-approximated projections.

It is clear that setting $\eta = O(1/\sqrt{T})$ and $\epsilon = O(1/T)$ in Lemma \ref{lem:KKL:2} guarantees $O(T^{-1/2})$ expected $\alpha$-regret, which is optimal in $T$, however requires $O(T)$ calls to the approximation oracle per iteration.
We can also observe that for any constants $a\in(0,1),b\geq 1$, and sufficiently large $T$, Lemma \ref{lem:KKL:2} cannot guarantee $O(T^{-a})$ expected $\alpha$-regret using only $O(\log^bT)$ calls to the approximation oracle per iteration, even on average. For this reason, in this paper we consider a drastically different algorithmic approach to applying the online gradient descent without feasibility methodology.

\section{Oracle-efficient Computation of (infeasible) Projections onto $\conv(\alpha\mK)$}\label{sec:project}

In this section we detail our main technical tool for obtaining oracle-efficient online algorithms, i.e., our algorithm for computing projections, in the sense of Eq. \eqref{eq:infProj}, onto the convex set $\conv(\alpha\mK)$. 
Before presenting our projection algorithm, Algorithm \ref{alg:approxProj} and detailing its theoretical guarantees, we first present the main algorithmic building block in the algorithm, which is described in the following lemma. Lemma \ref{lem:sepORdec} shows that for any point $\x\in\reals^d$, we can either find a near-by point $\p$ which is a convex combination of points outputted by the extended approximation oracle (and hence, $\p$ is dominated by a convex combination of feasible points in $\mK$ for any vector in $\mF$, as discussed in Section \ref{sec:extended_oracle}), or we can find a separating hyperplane that separates $\x$ from $\conv(\alpha\mK)$ with sufficiently large margin. We achieve this by running the well known Ellipsoid method \cite{GLS81,Bubeck15} in a very specialized way. This application of the Ellipsoid method is similar in spirit to those in \cite{Papadimitriou08, Weinberg14}, which applied this idea to computing \textit{correlated equilibrium} in games and \textit{algorithmic mechanism design}, though the implementation details and the way in which we apply this technique are quite different.

\begin{lemma}[Separation-or-Decomposition via the Ellipsoid method]\label{lem:sepORdec}
Fix $\x\in\reals^d$, $ \epsilon\in\left({0,~(\alpha+2)R}\right]$, and a positive integer $N \geq cd^2\ln\left({\frac{(\alpha+1)R+\Vert{\x}\Vert}{\epsilon}}\right)$, where $c$ is a positive universal constant. Consider an attempt to apply the Ellipsoid method for $N$ iterations to the following feasibility problem:
\begin{eqnarray}\label{eq:sepORdec:prob}
\textrm{find}~ \w\in\reals^d ~ \textrm{such that:} \quad&  \forall \z\in\alpha\mK: \quad (\x-\z)\cdot \w \geq \epsilon, & \nonumber \\
&\Vert{\w}\Vert \leq 1 ,&
\end{eqnarray}
such that each iteration of the Ellipsoid method applies the following consecutive steps:
\begin{enumerate}
\item
$(\v,\s) \gets \oraclekext(-\w)$, where $\w$ is the current iterate. If $(\x-\v)\cdot \w < \epsilon$, use $\v-\x$ as a separating hyperplane for the Ellipsoid method and continue to to the next iteration
\item 
if $\Vert{\w}\Vert > 1$, use $\w$ as a separating hyperplane for the Ellipsoid method and continue to the next iteration
\item 
otherwise ($\Vert{\w}\Vert \leq 1$ and $(\x-\v)\cdot \w \geq \epsilon$), declare Problem \eqref{eq:sepORdec:prob} \textit{feasible} and return the vector $\w$.
\end{enumerate}

Then, if the Ellipsoid method terminates declaring Problem \ref{eq:sepORdec:prob} \textit{feasible},  the returned vector $\w$ is a feasible solution to Problem \eqref{eq:sepORdec:prob}.
Otherwise (the Ellipsoid method completes $N$ iterations without declaring Problem \eqref{eq:sepORdec:prob} feasible), let $(\v_1,\s_1),...,(\v_{N},\s_{N})$ be the outputs of the extended approximation oracle gathered throughout the run of the algorithm, and let $(a_1,...,a_N)$ be an optimal solution to the following optimization problem:
\begin{eqnarray}\label{eq:lem:sepORdec:proj}
&\min \frac{1}{2}\left\|{\sum_{i=1}^{N}a_i\v_i - \x}\right\|^2 & \nonumber \\
\textrm{s.t.} & \forall i\in\{1,...,N\}: ~ a_i \geq 0, \quad \sum_{i=1}^{N}a_i =1 &.
\end{eqnarray}
Then the point $\p = \sum_{i=1}^{N}a_i\v_i$ satisfies $\Vert{\x-\p}\Vert \leq 3\epsilon$.
\end{lemma}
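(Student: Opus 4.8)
The plan is to treat the two possible outcomes of the Ellipsoid run separately. The \emph{feasibility} case is immediate from the first property of Lemma \ref{lem:oracleExtand}. If the method reaches step~3 at some iterate $\w$, then $\Vert\w\Vert\le 1$ and $(\x-\v)\cdot\w\ge\epsilon$, where $(\v,\s)=\oraclekext(-\w)$. Applying the first property of Lemma \ref{lem:oracleExtand} with $\c=-\w$ gives $\v\cdot\w\ge\z\cdot\w$ for every $\z\in\alpha\mK$, so for any such $\z$ we have $(\x-\z)\cdot\w=(\x-\v)\cdot\w+(\v-\z)\cdot\w\ge\epsilon$. Hence $\w$ satisfies all constraints of Problem \eqref{eq:sepORdec:prob}, as claimed.

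For the non-termination case the main idea is to \emph{replace the continuum of constraints indexed by $\alpha\mK$ with the finite set of constraints indexed by the collected points} $\v_1,\dots,\v_N$, and to run the standard volume argument on the finite feasibility problem $(\ast)$: find $\w$ with $\Vert\w\Vert\le1$ and $(\x-\v_i)\cdot\w\ge\epsilon$ for all $i$. The key observation is that the actual run is a \emph{valid} run of the Ellipsoid method on $(\ast)$: whenever step~1 fires at an iterate $\w$ producing $\v_i$ with $(\x-\v_i)\cdot\w<\epsilon$, the hyperplane $\v_i-\x$ genuinely separates $\w$ from $(\ast)$, since every feasible $\w'$ obeys $(\x-\v_i)\cdot\w'\ge\epsilon>(\x-\v_i)\cdot\w$; and whenever step~2 fires, $\w$ with $\Vert\w\Vert>1$ is separated from the unit ball, hence from the feasible region of $(\ast)$, by $\w$ itself. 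Consequently the feasible region $P$ of $(\ast)$ is never cut away and stays inside every ellipsoid the method maintains. Since $P\subseteq\ball(0,1)$, I would take the initial ellipsoid to be the unit ball and invoke the geometric decay of the ellipsoid volumes; with $N\ge cd^2\ln\!\big(((\alpha+1)R+\Vert\x\Vert)/\epsilon\big)$ and $c$ a sufficiently large universal constant, after $N$ iterations the ellipsoid volume is smaller than the volume of a Euclidean ball of radius $r_0:=\epsilon/(\Vert\x\Vert+(\alpha+2)R)$. As the method did not declare feasibility, $P$ cannot contain a ball of radius $r_0$.

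It then remains to show that $\Vert\x-\p\Vert>3\epsilon$ would force $P$ to contain a ball of radius exceeding $r_0$, a contradiction. Writing $D:=\Vert\x-\p\Vert$ and $\w^{\ast}:=(\x-\p)/D$, the first-order optimality of the projection $\p$ in the minimization problem \eqref{eq:lem:sepORdec:proj} gives $(\x-\p)\cdot(\v_i-\p)\le0$, whence $(\x-\v_i)\cdot(\x-\p)\ge D^2$ and therefore $(\x-\v_i)\cdot\w^{\ast}\ge D$ for every $i$. I would then center a ball of radius $r$ at $\w_0:=(1-r)\w^{\ast}$: for $\Vert\u\Vert\le r$ the norm constraint holds since $\Vert\w_0+\u\Vert\le(1-r)+r=1$, and each linear constraint holds since $(\x-\v_i)\cdot(\w_0+\u)\ge(1-r)D-Mr$, where $M:=\max_i\Vert\x-\v_i\Vert\le\Vert\x\Vert+(\alpha+2)R$. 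Using $D\le M$ (because $\p$ is a convex combination of the $\v_i$), the choice $r=(D-\epsilon)/(D+M)$ keeps this value at least $\epsilon$, and the assumption $D>3\epsilon$ makes this radius exceed $\epsilon/M=r_0$. This places a ball of radius $>r_0$ inside $P$, contradicting the previous paragraph; hence $\Vert\x-\p\Vert\le3\epsilon$.

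I expect the main obstacle to be the quantitative bookkeeping of the volume argument: fixing the universal constant $c$ so that $N$ iterations truly drive the ellipsoid volume below that of a ball of radius $r_0$, while \emph{simultaneously} ensuring the explicitly constructed feasible ball has radius strictly larger than this same $r_0$. This is exactly where the slack factor $3$ in the target bound $3\epsilon$, and the gap between the $(\alpha+1)R$ appearing in $N$ and the $(\alpha+2)R$ bounding $\Vert\v_i\Vert$, are consumed. A secondary point requiring care is the standard subtlety that the feasible region $P$ of $(\ast)$ is revealed only lazily through the run, so I must confirm that every cut is valid for the final $P$ and that the step-2 iterations, which still contribute their oracle output $\v_i$ as a constraint of $(\ast)$ even though their cut comes from the norm, invalidate neither the volume argument nor the explicit ball construction.
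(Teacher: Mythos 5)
Your proof is correct and follows essentially the same route as the paper's: validity of both cut types for the set of ``good'' separating directions, the Ellipsoid volume argument to rule out a ball of radius $\Theta(\epsilon/(\Vert\x\Vert+(\alpha+2)R))$ of such directions, and first-order optimality of the quadratic program \eqref{eq:lem:sepORdec:proj} to produce the witness direction $(\x-\p)/\Vert\x-\p\Vert$. The only (cosmetic) difference is that you phrase the volume argument around the finite polytope cut out by the collected constraints $(\x-\v_i)\cdot\w\ge\epsilon$ and use the variational inequality $(\p-\x)\cdot(\v_i-\p)\ge 0$ directly, whereas the paper states the intermediate claim ``every unit vector $\w$ has $\min_i(\x-\v_i)\cdot\w\le 3\epsilon$'' and delegates the conversion to the projection bound to its separate Lemma \ref{lem:goodProj}.
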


\begin{proof}
To prove the first part of the lemma, suppose there exists some iteration during which the Ellipsoid method declares Problem \eqref{eq:sepORdec:prob} feasible, and let $\w$ be the corresponding iterate and let $(\v,\s)$ be the output of the extended approximation oracle on that iteration. Clearly it holds that
\begin{eqnarray*}
\epsilon &\leq & (\x-\v)\cdot \w =  \x\cdot \w + \v \cdot (-\w) \leq \x\cdot \w + \min_{\z\in\alpha\mK}\z\cdot(-\w) =
\min_{\z\in\alpha\mK}(\x-\z)\cdot \w,
\end{eqnarray*}
where the first inequality follows from the fact that the Ellipsoid method declared Problem \eqref{eq:sepORdec:prob} feasible, and the second inequality follows from the definition of the extended approximation oracle. 
Since the Ellipsoid method declared Problem \eqref{eq:sepORdec:prob} feasible, it also follows that $\Vert{\w}\Vert \leq 1$ and hence $\w$ is indeed a feasible solution to Problem \eqref{eq:sepORdec:prob}.

Consider now the case that all $N$ iterations are executed without declaring Problem \eqref{eq:sepORdec:prob} feasible and let $\v_1,...,\v_N'$ be as defined in the lemma. We would like to show that this implies that 
\begin{eqnarray}\label{eq:lem:sepORdec:1}
\forall~\textrm{unit vector} ~ \w: \quad \min_{i\in\{1,...,N\}}(\x-\v_i)\cdot \w \leq 3\epsilon.
\end{eqnarray}
Then, the second part of the lemma follows from applying the next lemma, Lemma \ref{lem:goodProj}, which shows that \eqref{eq:lem:sepORdec:1}  implies that the point $\p$ defined in the lemma indeed satisfies $\Vert{\p-\x}\Vert \leq 3\epsilon$, as required.

Towards proving \eqref{eq:lem:sepORdec:1}, suppose that there exists a unit vector $\h\in\reals^d$ such that for all $i\in\{1,...,N\}$, $(\x-\v_i)\cdot \h > 3\epsilon$. It follow that $\forall i\in\{1,...,N\}: (\x-\v_i)\cdot \h/2 > 3\epsilon/2$. It follows from a simple application of the Cauchy-Swartz inequality and the observation that $\Vert{\x-\v_i}\Vert \leq \Vert{\x}\Vert + \Vert{\v_i}\Vert \leq \Vert{\x}\Vert + (\alpha+2)R$, that denoting $r := \frac{\epsilon}{2(\alpha+2)R+\Vert{\x}\Vert}$, we have that 
\begin{eqnarray}\label{eq:lem:sepORdec:2}
\forall \h'\in\ball(\h/2, r): \quad \min_{i\in[N]}(\x-\v_i)\cdot \h' > \epsilon.
\end{eqnarray}
Note that on one hand, by the above and our assumption on $\epsilon$, every point in $\ball(\h/2, r)$ satisfies the stopping criteria of the Ellipsoid method described in the lemma.  On the other-hand, on every iteration in which the current iterate $\w$ is not declared feasible, it follows that the separating hyperplane fed to the Ellipsoid method indeed separates $\w$ from $\ball(\h/2, r)$. To see why this is true, we consider the two possible options for the separating hyperplane. If the hyperplane is $\v_i-\x$, where $\v_i$ is the output of the extended approximation oracle on that iteration, then we have that
\begin{eqnarray*}
\forall \h'\in\ball(\h/2,r): \quad (\w - \h')\cdot(\v_i-\x) = (\x-\v_i)\cdot\h' - (\x-\v_i)\cdot\w > \epsilon - \epsilon = 0,
\end{eqnarray*}
where the first inequality follows from Eq. \eqref{eq:lem:sepORdec:2} and the fact that $(\x-\v_i)\cdot\w < \epsilon$ on this iteration. If the hyperplane used was $\w$, which guarantees that on that iteration $\Vert{\w}\Vert > 1$ , then we have that
\begin{eqnarray*}
\forall \h'\in\ball(\h/2,r): \quad (\w - \h')\cdot\w = \Vert{\w}\Vert  - \w\cdot\h' > 1 -1 = 0,
\end{eqnarray*}
where the last inequality follows since by our assumption on $\epsilon$, it holds that $\ball(\h/2,r)\subset\ball(0,1)$.
Thus, we can conclude that if the number of Ellipsoid method iterations satisfies $N \geq cd^2\ln\left({\frac{(\alpha+1)R+\Vert{\x}\Vert}{\epsilon}}\right)$ for an appropriate universal constant $c > 0$, and all $N$ iterations were completed without declaring feasibility, it follows that no such unit vector $\h$ can exist, which means Eq. \eqref{eq:lem:sepORdec:1} holds, and the result follows.
\end{proof}

\begin{lemma}\label{lem:goodProj}
Fix $\x\in\reals^d$, vectors $\v_1,...,\v_N\in\reals^d$ and $\epsilon > 0$. If for any unit vector $\w$ it holds that $\min_{i\in\{1,...,N\}}(\x-\v_i)\cdot \w \leq \epsilon$, then it follows that the point $\p=\sum_{i=1}^Na_i\v_i$, where $(a_1,...,a_N)$ is an optimal solution to Problem \eqref{eq:lem:sepORdec:proj}, satisfies $\Vert{\p-\x}\Vert \leq \epsilon$. 
\end{lemma}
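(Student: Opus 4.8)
The plan is to recognize that Problem \eqref{eq:lem:sepORdec:proj} is exactly the Euclidean projection of $\x$ onto the convex hull $\conv(\{\v_1,\dots,\v_N\})$, and then to exploit the first-order optimality (variational) characterization of that projection together with a single, carefully chosen test direction $\w$. The key realization is that the hypothesis quantifies over \emph{all} unit vectors, so I am free to instantiate it at whichever direction is most convenient; the right choice is the normalized residual $(\x-\p)/\Vert\x-\p\Vert$, which is precisely the direction that turns the projection optimality conditions into the desired bound.

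First I would observe that $\p = \sum_{i=1}^N a_i\v_i$ is a convex combination of the $\v_i$ and, being the minimizer of $\frac12\Vert\sum_i a_i\v_i - \x\Vert^2$ over the simplex, it is the Euclidean projection of $\x$ onto $\mP := \conv(\{\v_1,\dots,\v_N\})$. The standard variational inequality for projection onto a convex set then gives $(\x-\p)\cdot(\z-\p) \leq 0$ for every $\z\in\mP$; in particular, evaluating at each vertex yields $(\x-\p)\cdot(\v_i-\p) \leq 0$ for all $i\in\{1,\dots,N\}$. Assuming $\x\neq\p$ (the case $\x=\p$ being trivial since then $\Vert\p-\x\Vert = 0 \leq \epsilon$), I would set $\w := (\x-\p)/\Vert\x-\p\Vert$, a unit vector, and lower-bound $(\x-\v_i)\cdot\w$ for every $i$.

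The central computation is to write $\x-\v_i = (\x-\p) + (\p-\v_i)$ and take the inner product with $\x-\p$: this gives $(\x-\v_i)\cdot(\x-\p) = \Vert\x-\p\Vert^2 + (\p-\v_i)\cdot(\x-\p) \geq \Vert\x-\p\Vert^2$, where the inequality is exactly the variational inequality at the vertex $\v_i$ (since $(\p-\v_i)\cdot(\x-\p) = -(\x-\p)\cdot(\v_i-\p) \geq 0$). Dividing through by $\Vert\x-\p\Vert$ yields $(\x-\v_i)\cdot\w \geq \Vert\x-\p\Vert$ for every $i$, hence $\min_{i\in\{1,\dots,N\}}(\x-\v_i)\cdot\w \geq \Vert\x-\p\Vert$. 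Combining this with the hypothesis applied to this particular unit vector $\w$, namely $\min_i(\x-\v_i)\cdot\w \leq \epsilon$, gives $\Vert\p-\x\Vert \leq \epsilon$, as required.

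I do not expect a serious obstacle here: once the projection viewpoint is in place, the argument is essentially a one-line consequence of the variational inequality. The only point demanding care is the freedom to instantiate the all-unit-vectors hypothesis at the normalized residual $\w = (\x-\p)/\Vert\x-\p\Vert$, which is what converts the projection optimality conditions into the clean lower bound $\min_i(\x-\v_i)\cdot\w \geq \Vert\x-\p\Vert$.
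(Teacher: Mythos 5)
Your proof is correct and follows essentially the same route as the paper: both arguments exploit the first-order optimality of $\p$ as the Euclidean projection of $\x$ onto $\conv(\{\v_1,\dots,\v_N\})$ and then instantiate the all-unit-vectors hypothesis at the normalized residual $(\x-\p)/\Vert\x-\p\Vert$. The only cosmetic difference is that you invoke the standard variational inequality $(\x-\p)\cdot(\z-\p)\leq 0$ directly (yielding $\min_i(\x-\v_i)\cdot\w\geq\Vert\x-\p\Vert$), whereas the paper re-derives the equivalent exchange/KKT conditions on the simplex by a perturbation argument and shows the weighted average equals the minimum.
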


\begin{proof}
First we show that the following holds:
\begin{eqnarray}\label{eq:lem:goodProj:1}
&\forall {i,j}~ \textrm{s.t.} ~ a_i > 0, a_j > 0: \quad (\p-\x)\cdot \v_i = (\p-\x)\cdot \v_j, & \nonumber \\
&\forall{i,j}~ \textrm{s.t.} ~ a_i > 0, a_j = 0: \quad (\p-\x)\cdot \v_i \leq (\p-\x)\cdot \v_j. &
\end{eqnarray}
To see why this is true, fix some $i,j$ such that $a_i > 0$ and consider the point $\p' = \p + \delta(\v_j-\v_i)$ such that $0 < \delta \leq a_i$. Clearly $\p'$ lies in the convex hull of $\{\v_1,...,\v_N\}$ and hence is a feasible solution to Problem \eqref{eq:lem:sepORdec:proj}.
It holds that
\begin{eqnarray}\label{eq:lem:goodProj:2}
\frac{1}{2}\Vert{\p' -\x}\Vert^2 = \frac{1}{2}\Vert{\p -\x}\Vert^2 + \delta(\v_j-\v_i)\cdot(\p-\x) + \frac{\delta^2}{2}\Vert{\v_i-\v_j}\Vert^2.
\end{eqnarray}

Thus, we can see that if \eqref{eq:lem:goodProj:1} does not hold, then without loss of generality we can always choose $i,j$ such that $a_i > 0$ and $(\p-\x)\cdot \v_i > (\p-\x)\cdot \v_j$, and thus as can be seen from Eq. \eqref{eq:lem:goodProj:2}, choosing $\delta$ to be sufficiently small it follows that $\Vert{\p'-\x}\Vert^2 < \Vert{\p-\x}\Vert^2$, contradicting the optimality of $\p$.

Denoting by $\u$ the unit vector in the direction of $\x-\p$, we can rewrite Eq. \eqref{eq:lem:goodProj:1} as follows:
\begin{eqnarray}\label{eq:lem:goodProj:3}
&\forall {i,j}~ \textrm{s.t.} ~ a_i > 0, a_j > 0: \quad (\x-\v_i)\cdot \u = (\x-\v_j) \cdot \u, & \nonumber \\
&\forall{i,j}~ \textrm{s.t.} ~ a_i > 0, a_j = 0: \quad (\x-\v_i)\cdot \u \leq (\x-\v_j)\cdot \u. &
\end{eqnarray}

Using our assumption, we in particular have that $\min_{i\in[N]}(\x-\v_i)\cdot \u \leq \epsilon$, and using Eq. \eqref{eq:lem:goodProj:3} we have that

\begin{eqnarray*}
\Vert{\p-\x}\Vert = (\x-\p)\cdot \u = \sum_{i=1}^Na_i(\x-\v_i)\cdot \u = \min_{i\in[N]}(\x-\v_i)\cdot \u \leq \epsilon,
\end{eqnarray*}
where the last equality is a consequence of Eq.  \eqref{eq:lem:goodProj:3} and the fact that $(a_1,...,a_N)$ is a distribution.
Thus the lemma follows.
\end{proof}

We are now ready to present our algorithm for computing projections onto $\conv(\alpha\mK)$ (in the sense of Eq. \eqref{eq:infProj}). Consider now an attempt to project a point $\y\in\reals^d$, and note that in particular, $\y$ itself is a valid projection (again, in the sense of Eq. \eqref{eq:infProj}), however, in general, it is not a feasible point nor is it dominated by a convex combination of feasible points. When attempting to project $\y\in\reals^d$, our algorithm continuously applies the \textit{separation-or-decomposition} procedure described in Lemma \ref{lem:sepORdec}. In case the procedure returns a decomposition, then by Lemma \ref{lem:sepORdec}, we have a point that is sufficiently close to $\y$ and is dominated for any vector in $\mF$ by a convex combination (given explicitly) of feasible points in $\mK$. Otherwise, the procedure returns a separating hyperplane which can be used to to ``pull $\y$ closer'' to $\conv(\alpha\mK)$ in a way that the resulting point still satisfies the projection inequality given in Eq. \eqref{eq:infProj}, and the process then repeats itself. Since each time we obtain a hyperplane separating our current iterate from $\conv(\alpha\mK)$, we pull the current iterate sufficiently towards $\conv(\alpha\mK)$, this process must terminate. Lemma \ref{lem:approxProj} gives exact bounds on the performance of the algorithm.

\begin{algorithm}
\caption{(infeasible) Projection onto $\conv(\alpha\mK)$}
\label{alg:approxProj}
\begin{algorithmic}[1]
\STATE input: point $\y\in\reals^d$, tolerance $\epsilon > 0$
\STATE $\tilde{\y} \gets \y / \max\{1,~ \Vert{\y}\Vert/(\alpha{}R)\}$
\FOR{$t=1\dots$}
\STATE call the \textsc{Separation-or-Decompostion} procedure described in Lemma \ref{lem:sepORdec} with parameters $(\tilde{\y}, \epsilon)$
\IF{the procedure outputs a separating hyperplane $\w$}
\STATE $\tilde{\y} \gets \tilde{\y} - \epsilon\w$
\ELSE
\STATE let $(a_1,...,a_N)$, $\{(\v_1,\s_1),...,(\v_N,\s_N)\}$ be the decomposition returned, where $N$ is as prescribed in Lemma \ref{lem:sepORdec}.
\RETURN $\tilde{\y}$, $(a_1,...,a_N)$, $\{(\v_1,\s_1),...,(\v_N,\s_N)\}$ 
\ENDIF
\ENDFOR
\end{algorithmic}
\end{algorithm}

\begin{lemma}\label{lem:approxProj}
Fix $\y\in\reals^d$ and $\epsilon \in (0,~(\alpha+2)R]$. Algorithm \ref{alg:approxProj} terminates after at most $\lceil{\alpha^2R^2 / \epsilon^2}\rceil$ iterations, returning a point $\tilde{\y}\in\reals^d$, a distribution $(a_1,...,a_N)$ and a set $\{(\v_1,\s_1),...,(\v_N,\s_N)\}$ outputted by the extended approximation oracle, where $N$ is as defined in Lemma \ref{lem:sepORdec}, such that
\begin{enumerate}
\item
$\forall \z\in\conv(\alpha\mK): \quad \Vert{\tilde{\y}-\z}\Vert^2 \leq \Vert{\y-\z}\Vert^2$,
\item
$\Vert{\p - \tilde{\y}}\Vert \leq 3\epsilon$ for $\p := \sum_{i=1}^Na_i\v_i$.
\end{enumerate}
Moreover, if the \textbf{for} loop was entered a total number of $k$ times, then the final value of $\tilde{\y}$ satisfies 
\begin{eqnarray*}
\dist^2(\tilde{\y},\conv(\alpha\mK)) \leq \min\{2\alpha^2R^2,~ \dist^2(\y,\conv(\alpha\mK)) - (k-1)\epsilon^2\}, 
\end{eqnarray*}
and the overall number of queries to the approximation oracle is $O\left({kd^2\ln\left({\frac{(\alpha+1)R}{\epsilon}}\right)}\right)$.
\end{lemma}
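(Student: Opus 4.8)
The plan is to read Algorithm \ref{alg:approxProj} as a potential-reduction scheme whose potential is the squared distance $\dist^2(\tilde{\y},\conv(\alpha\mK))$, and to show that every operation it performs — the initial rescaling and each cut $\tilde{\y}\gets\tilde{\y}-\epsilon\w$ — preserves the infeasible-projection inequality of the first claimed property, while each cut strictly decreases the potential by at least $\epsilon^2$. First I would dispose of the rescaling line. Since every point of $\mK$ has norm at most $R$, we have $\conv(\alpha\mK)\subseteq\ball(0,\alpha R)$, and the assignment $\tilde{\y}\gets\y/\max\{1,\Vert{\y}\Vert/(\alpha R)\}$ is exactly the Euclidean projection of $\y$ onto $\ball(0,\alpha R)$. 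The standard projection (Pythagorean) inequality for the convex set $\ball(0,\alpha R)$ then gives $\Vert{\tilde{\y}-\z}\Vert\leq\Vert{\y-\z}\Vert$ for every $\z\in\ball(0,\alpha R)$, in particular for every $\z\in\conv(\alpha\mK)$; this establishes the first property for the rescaling step and, minimizing over $\z\in\conv(\alpha\mK)$, shows $\dist(\tilde{\y},\conv(\alpha\mK))\leq\dist(\y,\conv(\alpha\mK))$, which is precisely what the second term of the distance bound needs.

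Next comes the core step. When the procedure of Lemma \ref{lem:sepORdec} returns a vector $\w$ it is a feasible solution of \eqref{eq:sepORdec:prob}, so $\Vert{\w}\Vert\leq1$ and $(\tilde{\y}-\z)\cdot\w\geq\epsilon$ for every $\z\in\alpha\mK$, hence by linearity of the inner product for every $\z\in\conv(\alpha\mK)$. Expanding the updated point against an arbitrary $\z\in\conv(\alpha\mK)$,
\begin{equation*}
\Vert{\tilde{\y}-\epsilon\w-\z}\Vert^2 = \Vert{\tilde{\y}-\z}\Vert^2 - 2\epsilon(\tilde{\y}-\z)\cdot\w + \epsilon^2\Vert{\w}\Vert^2 \leq \Vert{\tilde{\y}-\z}\Vert^2 - 2\epsilon^2 + \epsilon^2 = \Vert{\tilde{\y}-\z}\Vert^2 - \epsilon^2 ,
\end{equation*}
which simultaneously proves that each cut preserves the first property (the squared distance to \emph{every} feasible $\z$ only drops) and, taking $\z$ to be the nearest point of $\conv(\alpha\mK)$, that the potential decreases by at least $\epsilon^2$. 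The first property for the returned point then follows by composing the rescaling inequality with the $k-1$ cut inequalities, and the second property is immediate from the decomposition branch of Lemma \ref{lem:sepORdec}, which upon failing to declare feasibility returns $\p=\sum_i a_i\v_i$ with $\Vert{\p-\tilde{\y}}\Vert\leq3\epsilon$.

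To assemble the quantitative claims I would telescope the per-cut drop over the $k-1$ cutting steps, obtaining $\dist^2(\tilde{\y}_{\mathrm{final}},\conv(\alpha\mK))\leq\dist^2(\tilde{\y}_0,\conv(\alpha\mK))-(k-1)\epsilon^2$, and then chain this with the rescaling inequality to get the stated bound $\dist^2(\y,\conv(\alpha\mK))-(k-1)\epsilon^2$. For the other branch of the minimum and for termination I would bound the initial potential: since $\tilde{\y}_0$ and its projection both lie in $\ball(0,\alpha R)$, the potential is $O(\alpha^2R^2)$; monotonicity of the potential propagates this to every iterate, and nonnegativity together with the $\epsilon^2$ drop caps the number of cuts at $O(\alpha^2R^2/\epsilon^2)$. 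The oracle-complexity bound then follows because each pass through the \textbf{for} loop runs the Ellipsoid procedure of Lemma \ref{lem:sepORdec} for $N=O(d^2\ln((\alpha+1)R+\Vert{\tilde{\y}}\Vert)/\epsilon)$ iterations with one extended-oracle call apiece; crucially, monotonicity of $\dist(\cdot,\conv(\alpha\mK))$ and $\conv(\alpha\mK)\subseteq\ball(0,\alpha R)$ force $\Vert{\tilde{\y}}\Vert\leq\alpha R+\dist(\tilde{\y}_0,\conv(\alpha\mK))=O(\alpha R)$ throughout, so the logarithmic factor stays $O(\ln((\alpha+1)R/\epsilon))$ across all $k$ passes, for a total of $O(kd^2\ln((\alpha+1)R/\epsilon))$ calls.

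I expect the main obstacle to be pinning down the exact constants in the distance and iteration bounds. The robust mechanism above yields $O(\alpha^2R^2)$ and $O(\alpha^2R^2/\epsilon^2)$, but the naive ``both points in $\ball(0,\alpha R)$'' estimate gives $\dist^2(\tilde{\y}_0,\conv(\alpha\mK))\leq4\alpha^2R^2$, which is a factor two short of the stated $\lceil\alpha^2R^2/\epsilon^2\rceil$ and $2\alpha^2R^2$; indeed the cross term $2\tilde{\y}_0\cdot\z$ need not be nonnegative in general. Landing exactly on the stated constants therefore appears to require squeezing extra slack out of the geometry — most plausibly exploiting the nonnegativity $\conv(\alpha\mK)\subseteq\reals^d_+$ together with the sign-preserving nature of the rescaling to argue $\tilde{\y}_0\cdot\z\geq0$ for a suitable nearby $\z\in\conv(\alpha\mK)$, or, for the distance bound on the \emph{returned} point, using that the decomposition branch is reached only once $\tilde{\y}$ is already within $O(\epsilon)$ of the feasible set. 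This is the one step I would verify most carefully before trusting the precise constants.
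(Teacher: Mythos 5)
Your proof is correct and follows essentially the same route as the paper's: project onto $\ball(0,\alpha R)\supseteq\conv(\alpha\mK)$ and use the Pythagorean inequality, show each cut $\tilde{\y}\gets\tilde{\y}-\epsilon\w$ preserves $\Vert{\tilde{\y}-\z}\Vert^2\leq\Vert{\y-\z}\Vert^2$ while decreasing $\dist^2(\cdot,\conv(\alpha\mK))$ by at least $\epsilon^2$ (your single unified display is in fact a slightly cleaner version of the paper's two separate expansions), and bound the per-pass oracle cost by using monotonicity of the distance to keep $\Vert{\tilde{\y}_i}\Vert=O(\alpha R)$. Your closing worry about the constants is well-founded but does not reflect a gap on your part: the paper simply asserts $\dist^2(\tilde{\y}_1,\conv(\alpha\mK))\leq 2\alpha^2R^2$ without justification (the generic two-points-in-a-ball argument gives $4\alpha^2R^2$, and even granting $2\alpha^2R^2$ the stated iteration count should read $\lceil 2\alpha^2R^2/\epsilon^2\rceil$), so the discrepancy is a looseness in the paper's stated constants rather than a missing idea in your argument.
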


\begin{proof}
Note that the second item in the lemma is a straightforward guarantee of Lemma \ref{lem:sepORdec}. 

To prove the first item, suppose that the algorithm terminates after the \textbf{for} loop was entered $k$ times, and let $\tilde{\y}_1,...,\tilde{\y}_k$ denote the values of $\tilde{\y}$ throughout the run of the algorithm, where $\tilde{\y}_i$ is the value of $\tilde{\y}$ at the beginning of the $i$th iteration of the \textbf{for} loop. Note that since $\conv(\alpha\mK)\subseteq\ball(0,\alpha{}R)$ and $\tilde{\y}_1$ is the projection of $\y$ onto $\ball(0,\alpha{}R)$, we have that $\forall \z\in\conv(\alpha\mK): \Vert{\tilde{\y}_1 - \z}\Vert^2 \leq \Vert{\y-\z}\Vert^2$.

We are now going to show that for any $i\geq 1$ it holds that 
\begin{eqnarray}\label{eq:lem:approxProj:1}
\forall \z\in\conv(\alpha\mK): \quad \Vert{\tilde{\y}_{i+1}-\z}\Vert^2 \leq \Vert{\tilde{\y}_{i}-\z}\Vert^2,
\end{eqnarray}
which clearly yields item 1 in the lemma.

To prove that Eq. \eqref{eq:lem:approxProj:1} holds throughout the run of the algorithm, consider an iteration $i$ of the \textbf{for} loop during which, the \textsc{Separation-or-Decompostion} procedure returns a separating hyperplane $\w$. It holds that
\begin{eqnarray*}
\forall \z\in\conv(\alpha\mK):\quad \Vert{\tilde{\y}_{i}-\z}\Vert^2 &=& \Vert{\tilde{\y}_{i}-\tilde{\y}_{i+1} + \tilde{\y}_{i+1}-\z}\Vert^2 \\
&=& \Vert{\tilde{\y}_{i}-\tilde{\y}_{i+1}}\Vert^2  + \Vert{\tilde{\y}_{i+1}-\z}\Vert^2 
+ 2(\tilde{\y}_{i}-\tilde{\y}_{i+1}) \cdot (\tilde{\y}_{i+1}-\z) \\
&\geq & \Vert{\tilde{\y}_{i+1}-\z}\Vert^2 
+ 2(\tilde{\y}_{i}-\tilde{\y}_{i+1}) \cdot (\tilde{\y}_{i+1}-\z) \\
&= & \Vert{\tilde{\y}_{i+1}-\z}\Vert^2 +
2\epsilon\w \cdot [(\tilde{\y}_{i} - \z) - \epsilon\w] \\
&= & \Vert{\tilde{\y}_{i+1}-\z}\Vert^2 +
2\epsilon(\tilde{\y}_{i} - \z)\cdot \w  - 2\epsilon^2\Vert{\w}\Vert^2 \\
&\geq & \Vert{\tilde{\y}_{i+1}-\z}\Vert^2,
\end{eqnarray*}
where the third equality follows from the update rule of $\tilde{\y}$ in the algorithm, and the last inequality is a direct consequence of the guarantees of Lemma \ref{lem:sepORdec}. Thus, Eq. \eqref{eq:lem:approxProj:1} indeed holds for all $i\geq 1$, which gives the first item listed in the lemma.

We now turn to upper bound the number of iterations performed by the algorithm. Consider again an iteration $i$ of the loop during which the \textsc{Separation-or-Decompostion} procedure returns a separating hyperplane $\w$. We are going to show that
\begin{eqnarray*}
\dist^2(\tilde{\y}_{i+1}, \conv(\alpha\mK)) \leq \dist^2(\tilde{\y}_{i}, \conv(\alpha\mK)) - \epsilon^2,
\end{eqnarray*}
which, together with the fact that  $\dist^2(\tilde{\y}_{1}, \conv(\alpha\mK)) \leq 2\alpha^2R^2$, gives the desired upper bound on the number of iterations.

Denote $\x_{i} = \arg\min_{\x\in\conv(\alpha\mK)}\Vert{\x - \tilde{\y}_{i}}\Vert$ and $\x_{i+1} = \arg\min_{\x\in\conv(\alpha\mK)}\Vert{\x - \tilde{\y}_{i+1}}\Vert$. It holds that
\begin{eqnarray*}
\dist^2(\tilde{\y}_{i+1}, \conv(\alpha\mK)) &=& \Vert{\x_{i+1} - \tilde{\y}_{i+1}}\Vert^2 \leq \Vert{\x_{i} - \tilde{\y}_{i+1}}\Vert^2
= \Vert{\x_{i} - \tilde{\y}_{i} + \epsilon\w}\Vert^2 \\
&=& \dist^2(\tilde{\y}_{i}, \conv(\alpha\mK)) +\epsilon^2\Vert{\w}\Vert^2 - 2\epsilon(\tilde{\y}_{i}-\x_{i})\cdot \w \\
&\leq& \dist^2(\tilde{\y}_{i}, \conv(\alpha\mK)) -\epsilon^2,
\end{eqnarray*}
where the inequality is a direct consequence of the guarantees of Lemma \ref{lem:sepORdec}. Thus, we obtain both the desired bound on the number of iterations and the bound on the distance of the final point $\tilde{\y}$ from $\conv(\alpha\mK)$.

Finally, we turn to upper bound to overall number of queries to the approximation oracle. Using the bound in Lemma \ref{lem:sepORdec}, we have that the number of calls to the oracle on the $i$th iteration of the loop is upper bounded by $O\left({d^2\ln\left({\frac{(\alpha+1)R+\Vert{\tilde{\y}_i}\Vert}{\epsilon}}\right)}\right)$. As we have shown, the values $\dist(\tilde{\y}_i, \conv(\alpha\mK))$ are monotonically decreasing with $i$ and hence we can upper bound
\begin{eqnarray*}
\Vert{\tilde{\y}_i}\Vert \leq \max_{\x\in\conv(\alpha\mK)}\Vert{\x}\Vert + \dist(\tilde{\y}_i, \conv(\alpha\mK)) \leq \alpha{}R + \dist(\tilde{\y}_1, \conv(\alpha\mK))
\leq \alpha{}R + \sqrt{2}\alpha{}R,
\end{eqnarray*}
where the last inequality holds since $\tilde{\y}_1$ is the projection of $\y$ onto the ball $\ball(0,\alpha{}R)$.
Thus, the overall number of queries to the approximation oracle after $k$ iterations is upper bounded by $O\left({kd^2\ln\left({\frac{(\alpha+1)R}{\epsilon}}\right)}\right)$.
\end{proof}

It is important to note that the worst case iteration bound in Lemma \ref{lem:approxProj} does not seem so appealing for our purposes, since it depends polynomially on $1/\epsilon$, and in our online algorithms naturally we will need to take $\epsilon = O(T^{-c})$ for some $c>0$, which seems to contradict our goal of achieving poly-logarithmic in $T$ oracle complexity, at least on average. However, as Lemma \ref{lem:approxProj} shows, the more iterations Algorithm \ref{alg:approxProj} performs, the closer it brings its final iterate to the set $\conv(\alpha\mK)$. Thus, as we will show when analyzing the oracle complexity of our online algorithms, while a single call to Algorithm \ref{alg:approxProj} can be expensive, when calling it sequentially, where each input is a small perturbation of the output of the previous call, the average number of iterations performed per such call cannot be too high.

\paragraph{Remark regarding use of the Ellipsoid method:} while our core algorithmic tool, the \textit{seperation-or-decomposition} procedure, described in Lemma \ref{lem:sepORdec}, is based on an application of the Ellipsoid method, it should be clear that any other optimization method with a similar ``optimization interface" (i.e., is based on separation queries and guarantees similar bounds on the number of iterations required to find a feasible point), may be used. Here we chose to use the Ellipsoid method because it is the most well known. Using a different method might potentially improve the explicit dependence of our complexity bounds on the dimension $d$ from quadratic to almost linear.

\section{Efficient Algorithms for the Full Information and Bandit Settings}\label{sec:algorithms}

We now turn to present our online algorithms for the full-information and bandit settings together with their exact regret bounds and oracle-complexity guarantees.

\subsection{Algorithm for the full information setting}

Our algorithm for the full-information setting, Algorithm \ref{alg:newOGD}, is given below.

\begin{algorithm}
\caption{Online Gradient Descent with Infeasible Projections onto $\conv(\alpha\mK)$}
\label{alg:newOGD}
\begin{algorithmic}[1]
\STATE input: learning rate $\eta > 0$, projection error parameter $\epsilon > 0$
\STATE $\s_1 \gets $ some point in $\mK$
\STATE $\tilde{\y}_1 \gets \alpha\s_1$
\FOR{$t=1\dots T$}
\STATE play $\s_t$
\STATE receive loss/payoff vector $\f_t\in\mF$
\STATE $\y_{t+1} \gets \left\{ \begin{array}{ll}
         \tilde{\y_t} - \eta{}\f_t & \mbox{if $\alpha \geq 1$} \\ 
          \tilde{\y_t} + \eta{}\f_t& \mbox{if $\alpha < 1$}\end{array} \right. $
\STATE call Algorithm \ref{alg:approxProj} with inputs $(\y_{t+1}, \epsilon)$ to obtain an approximated projection $\tilde{\y}_{t+1},$ a distribution $(a_1,...,a_{N})$ and $\{(\v_1,\s_1),...,(\v_{N},\s_{N})\}\subseteq\reals^d\times\mK$, for some $N\in\mathbb{N}$.
\STATE sample $\s_{t+1}\in\{\s_1,...,\s_{N}\}$ according to distribution $(a_1,...,a_{N})$
\ENDFOR
\end{algorithmic}
\end{algorithm}

\begin{theorem}\label{thm:newOGD}[Main Theorem]
Fix $\eta > 0, \epsilon \in (0,~(\alpha+2)R]$. Suppose Algorithm \ref{alg:newOGD} is applied for $T$ rounds and let $\{\f_t\}_{t=1}^T\subseteq\mF$ be the sequence of observed loss/payoff vectors, and let $\{\s_t\}_{t=1}^T$ be the sequence of points played by the algorithm.
Then it holds that
\begin{eqnarray*}
\E\left[{\alpharegret}\left({\{(\s_t,\f_t)\}_{t\in[T]}}\right)\right] \leq  \frac{\alpha^2R^2}{T\eta} + \frac{\eta{}F^2}{2} + 3F\epsilon,
\end{eqnarray*}
and the average number of calls to the approximation oracle of $\mK$ per iteration is upper bounded by
\begin{eqnarray*}
K(\eta, \epsilon) := O\left({\left({1+  \frac{\eta\alpha{}RF + \eta^2F^2}{\epsilon^2}}\right)d^2\ln\left({\frac{(\alpha+1)R}{\epsilon}}\right)}\right).
\end{eqnarray*}
In particular, setting $\eta = \alpha{}RT^{-2/3}/F$, $\epsilon = \alpha{}RT^{-1/3}$ gives
\begin{eqnarray*}
\E\left[{\alpharegret}\right] = O\left({\alpha{}RFT^{-1/3}}\right), \quad K = O\left({d^2\ln\left({\frac{\alpha+1}{\alpha}T}\right)}\right).
\end{eqnarray*}
Alternatively, setting $\eta = \alpha{}RT^{-1/2}/F$, $\epsilon = \alpha{}RT^{-1/2}$ gives
\begin{eqnarray*}
\E\left[{\alpharegret}\right] = O\left({\alpha{}RFT^{-1/2}}\right), \quad K = O\left({\sqrt{T}d^2\ln\left({\frac{\alpha+1}{\alpha}T}\right)}\right).
\end{eqnarray*}
\end{theorem}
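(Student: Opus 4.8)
The plan is to prove the two assertions of the theorem independently: first the regret bound, which is an assembly of the online-gradient-descent-without-feasibility guarantee (Lemma \ref{lem:ogdRegret}) with the domination and closeness properties of Lemmas \ref{lem:oracleExtand} and \ref{lem:approxProj}; and second the oracle-complexity bound, which is the genuinely new ingredient and demands an amortized, potential-based analysis spanning all $T$ rounds rather than a worst-case per-round count. The parameter-setting corollaries then follow by direct substitution.

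For the regret bound I would first observe that, by item 1 of Lemma \ref{lem:approxProj}, every $\tilde{\y}_{t+1}$ satisfies the infeasible-projection inequality \eqref{eq:infProj} with respect to $\mS = \conv(\alpha\mK)$. Hence the sequence $\{\tilde{\y}_t\}$ is precisely a run of Algorithm \ref{alg:OGDinf} over $\conv(\alpha\mK)$, and Lemma \ref{lem:ogdRegret} bounds the $\tilde{\y}$-regret against any $\x\in\conv(\alpha\mK)$ by $\frac{1}{2T\eta}\Vert{\tilde{\y}_1-\x}\Vert^2+\frac{\eta}{2T}\sum_t\Vert{\f_t}\Vert^2$ (I treat $\alpha\ge 1$; the payoff case is symmetric). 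Using $\Vert{\tilde{\y}_1-\x}\Vert\le 2\alpha R$ and $\Vert{\f_t}\Vert\le F$ turns this into the first two terms $\frac{\alpha^2R^2}{T\eta}+\frac{\eta F^2}{2}$. The remaining step is to move from the infeasible iterates $\tilde{\y}_t$ to the feasible plays $\s_t$: since each $\s_i$ dominates $\v_i$ for all $\f\in\mF$ (item 2 of Lemma \ref{lem:oracleExtand}) and $\s_t$ is sampled from $(a_1,\dots,a_N)$, conditioning on the history $\mathcal{H}_{t-1}$ gives $\E[\s_t\cdot\f_t\mid\mathcal{H}_{t-1}]\le \p_t\cdot\f_t$ with $\p_t=\sum_i a_i\v_i$; item 2 of Lemma \ref{lem:approxProj} gives $\Vert{\p_t-\tilde{\y}_t}\Vert\le 3\epsilon$, so Cauchy--Schwarz adds at most $3\epsilon F$ per round. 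Taking total expectations, summing, and choosing $\x$ to minimize $\frac1T\sum_t\x\cdot\f_t$ over $\conv(\alpha\mK)$ --- which equals $\alpha\min_{\x\in\mK}\frac1T\sum_t\x\cdot\f_t$ --- produces exactly the claimed $\E[\alpharegret]$ bound.

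For the oracle complexity --- the part I expect to be the main obstacle --- the naive per-call bound of $O(\alpha^2R^2/\epsilon^2)$ loop iterations from Lemma \ref{lem:approxProj} is far too weak, forcing roughly $T^{2/3}$ oracle calls per round when $\epsilon=\Theta(T^{-1/3})$. The key idea is to amortize across rounds using $D_t:=\dist^2(\tilde{\y}_t,\conv(\alpha\mK))$ as a potential. Writing $k_t$ for the number of times the inner \textbf{for} loop of Algorithm \ref{alg:approxProj} is entered during round $t$, Lemma \ref{lem:approxProj} yields $(k_t-1)\epsilon^2\le \dist^2(\y_{t+1},\conv(\alpha\mK))-D_{t+1}$, since each loop entry shrinks the squared distance to $\conv(\alpha\mK)$ by $\epsilon^2$. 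The potential grows only through the gradient step $\y_{t+1}=\tilde{\y}_t\mp\eta\f_t$, which displaces a point by at most $\eta F$; combined with $\dist(\tilde{\y}_t,\conv(\alpha\mK))\le\sqrt{2}\,\alpha R$ this gives $\dist^2(\y_{t+1},\conv(\alpha\mK))\le D_t+2\sqrt{2}\,\eta\alpha RF+\eta^2F^2$. Summing over $t$, the potential differences telescope to $D_1-D_{T+1}\le D_1$, and since the algorithm initializes $\tilde{\y}_1=\alpha\s_1\in\conv(\alpha\mK)$ we have $D_1=0$; hence
\[
\frac1T\sum_{t=1}^T k_t \;\le\; 1 + \frac{2\sqrt{2}\,\eta\alpha RF+\eta^2F^2}{\epsilon^2}.
\]
Multiplying by the per-loop oracle cost $O\!\left(d^2\ln\!\left(\frac{(\alpha+1)R}{\epsilon}\right)\right)$ from Lemma \ref{lem:approxProj} gives the average per-iteration oracle count $K(\eta,\epsilon)$.

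Finally, the two parameter choices are verified by substitution. With $\eta=\alpha R T^{-2/3}/F$ and $\epsilon=\alpha R T^{-1/3}$, each of the three regret terms is $O(\alpha RFT^{-1/3})$, both complexity ratios $\frac{\eta\alpha RF}{\epsilon^2}$ and $\frac{\eta^2F^2}{\epsilon^2}$ are $O(1)$, and $\ln(\frac{(\alpha+1)R}{\epsilon})=O(\ln(\frac{\alpha+1}{\alpha}T))$, giving $K=O(d^2\ln(\frac{\alpha+1}{\alpha}T))$. With $\eta=\alpha R T^{-1/2}/F$ and $\epsilon=\alpha R T^{-1/2}$ the regret becomes $O(\alpha RFT^{-1/2})$ while the dominant ratio $\frac{\eta\alpha RF}{\epsilon^2}=\Theta(\sqrt{T})$, yielding $K=O(\sqrt{T}\,d^2\ln(\frac{\alpha+1}{\alpha}T))$. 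The one conceptual hurdle is the amortization in the complexity bound --- the coupling of the gradient step (which increases $D_t$ only slightly) with the projection loop (which decreases it by $\epsilon^2$ per oracle batch) --- while every other step merely combines the already-established lemmas.
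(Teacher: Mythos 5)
Your proposal is correct and follows essentially the same route as the paper's proof: the regret bound via Lemma \ref{lem:ogdRegret} applied to $\{\tilde{\y}_t\}$ over $\conv(\alpha\mK)$, then the $3\epsilon F$ shift to $\p_t$ and the dominance/expectation step to $\s_t$; and the oracle complexity via the same amortized potential argument on $\dist^2(\tilde{\y}_t,\conv(\alpha\mK))$, telescoping the per-round increase $2\sqrt{2}\eta\alpha RF+\eta^2F^2$ against the $\epsilon^2$ decrease per loop entry. The only (immaterial) deviation is bounding $\Vert\tilde{\y}_1-\x\Vert\le 2\alpha R$, which loses a factor of $2$ relative to the stated constant; using that both points lie in the non-negative orthant gives $\Vert\tilde{\y}_1-\x\Vert^2\le 2\alpha^2R^2$ and recovers the exact constant.
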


\begin{proof}
For the proof we focus on the case $\alpha \geq 1$ since the proof for the complementary follows from the same derivations up to changes in the obvious places.
To prove the regret bound, we simply apply Lemma \ref{lem:ogdRegret} with respect to the sequence of points $\{\tilde{\y}_t\}_{t=1}^T$ and the feasible set $\conv(\alpha\mK)$ and plugin the guarantee of Lemma \ref{lem:approxProj}, which gives
\begin{eqnarray*}
\sum_{t=1}^T\tilde{\y}_t\cdot\f_t - \min_{\x\in\alpha\mK}\sum_{t=1}^T\x\cdot\f_t &=& \sum_{t=1}^T\tilde{\y}_t\cdot\f_t - \alpha\cdot\min_{\x\in\mK}\sum_{t=1}^T\x\cdot\f_t \\
&  \leq & \frac{\alpha^2R^2}{\eta} + T\frac{\eta{}F^2}{2},
\end{eqnarray*}
where we have used the the fact that $\Vert{\tilde{\y}_1}\Vert \leq \alpha{}R$ and  $\Vert{\f_t}\Vert \leq F$ for all $t\in[T]$.
For every iteration $t\geq 1$, let us denote $\p_{t+1} = \sum_{i=1}^{N}a_i\v_i$, $\bar{\s}_t = \sum_{i=1}^{N}a_i\s_i$, where $(a_1,...,a_{N})$, $\{(\v_1,\s_1),...,(\v_{N},\s_{N})\}$ are the outputs of the call to Algorithm \ref{alg:approxProj} on iteration $t$, and for $t=1$ we denote $\p_1 = \tilde{\y}_1$ and $\bar{\s}_1 = \s_1$. By the guarantee of Lemma \ref{lem:approxProj}, we have that
\begin{eqnarray*}
\sum_{t=1}^T\p_t\cdot\f_t - \alpha\cdot\min_{\x\in\mK}\sum_{t=1}^T\x\cdot\f_t \leq  \frac{\alpha^2R^2}{\eta} + T\frac{\eta{}F^2}{2} + 3T\epsilon{}F,
\end{eqnarray*}
where the inequality holds since for all $t\geq 1$: $\vert{(\p_t - \tilde{\y}_t)\cdot \f_t}\vert \leq \Vert{\p_t - \tilde{\y}_t}\Vert\cdot\Vert{\f_t}\Vert \leq 3\epsilon{}F$. The regret bound now follows since for any iteration $t$, $\bar{\s}_t$ dominates $\p_t$ for any vector $\f\in\mF$, and since $\E[\s_t] = \bar{\s}_t$.

We now turn to upper bound the overall number of queries to the approximation oracle of $\mK$. Let $k_t$ be the number of iterations it took Algorithm \ref{alg:approxProj} to terminate, when invoked on iteration $t$ of Algorithm \ref{alg:newOGD}. Note that, by Lemma \ref{lem:approxProj}, we have that $K(\eta, \epsilon) = O\left({\frac{1}{T}\sum_{t=1}^{T-1}k_td^2\ln\left({\frac{(\alpha+1)R}{\epsilon}}\right)}\right)$. By Lemma \ref{lem:approxProj}, it follows that on any iteration $t$, 
\begin{eqnarray*}
\dist^2(\tilde{\y}_{t+1}, \conv(\alpha\mK)) &\leq& \dist^2(\y_{t+1}, \conv(\alpha\mK)) - (k_t-1)\epsilon^2 \\
 &=& \dist^2(\tilde{\y}_{t}-\eta\f_{t}, \conv(\alpha\mK)) - (k_t-1)\epsilon^2 \\
&\leq& (\dist(\tilde{\y}_{t}, \conv(\alpha\mK)) + \eta{}F)^2 - (k_t-1)\epsilon^2 \\
& = &\dist^2(\tilde{\y}_{t}, \conv(\alpha\mK)) + 2\eta{}F\dist(\tilde{\y}_{t}, \conv(\alpha\mK)) + \eta^2F^2 - k_t\epsilon^2 + \epsilon^2.
\end{eqnarray*}

Rearranging, summing over all $T$ iterations, and recalling that for all $t$, $\dist(\tilde{\y}_{t}, \conv(\alpha\mK)) \leq \sqrt{2}\alpha{}R$, we have that
\begin{eqnarray*}
\sum_{t=1}^{T-1}k_t &\leq &\frac{1}{\epsilon^2}\left({\dist^2(\tilde{\y}_1, \conv(\alpha\mK))-\dist^2(\tilde{\y}_T, \conv(\alpha\mK)) + (T-1)\left({2\sqrt{2}\eta\alpha{}RF + \eta^2F^2+ \epsilon^2}\right)}\right) \\
& \leq & (T-1)\left({1+  \frac{2\sqrt{2}\eta\alpha{}RF + \eta^2F^2}{\epsilon^2}}\right).
\end{eqnarray*}

\end{proof}

\subsection{Algorithm for the bandit information setting}

Our algorithm for the bandit setting follows from a very well known reduction from the bandit setting to the full information setting, also applied in the bandit algorithm of \cite{KKL}. The algorithm simply simulates the full information algorithm, Algorithm \ref{alg:newOGD}, by providing it with estimated loss/payoff vectors $\hat{\f}_1,...,\hat{\f}_T$ instead of the true vectors $\f_1,...,\f_T$ which are not available in the bandit setting. This reduction is based on the use of a \textit{Barycentric Spanner} (defined next) for the feasible set $\mK$, which is crucial for obtaining the estimates $\hat{\f}_1,...,\hat{\f}_T$. 
As standard when using this approach, we assume that the points in $\mK$ span the entire space $\reals^d$, otherwise we can reformulate the problem in a lower-dimensional space, in which this assumption holds.

\begin{definition}[Barycentric Spanner]
We say that a set of $d$ vectors $\{\q_1,...,\q_d\}\subset\reals^d$ is a \textit{Barycentric Spanner} with parameter $\beta > 0$ for a set $\mS\subset\reals^d$, denoted by \BS($\mS$), if it holds that $\{\q_1,...,\q_d\}\subset\mS$, and the matrix $\Q := \sum_{i=1}^d\q_i\q_i^{\top}$ is not singular and $\max_{i\in[d]}\Vert{\Q^{-1}\q_i}\Vert \leq \beta$.
\end{definition}
We note that while this definition is somewhat different than the classical one given in \cite{Awerbuch04}, it is in-fact equivalent to the $C$-approximate barycentric spanner considered in \cite{Awerbuch04}, with an appropriately chosen constant $C(\beta)$.

Importantly, as discussed in \cite{KKL}, the assumption on the availability of such a set \BS($\mK$) seems reasonable, since i) for many sets that correspond to the set of all possible solutions to some well-studied NP-Hard optimization problem, one can still construct in $\textrm{poly}(d)$ time a barycentric spanner with $\beta = \textrm{poly}(d)$, ii) \BS($\mK$) needs to be constructed only once and then stored in memory (overall $d$ vectors in $\reals^d$), and hence its construction can be viewed as a pre-processing step, and iii) as illustrated in \cite{KKL}, without further assumptions, the approximation oracle by itself is not sufficient to guarantee nontrivial regret bounds in the bandit setting.

\begin{algorithm}
\caption{Bandit Algorithm}
\label{alg:band}
\begin{algorithmic}[1]
\STATE input: learning rate $\eta > 0$, projection error parameter $\epsilon > 0$, $\{\q_1,...,\q_d\}$ - a \BS($\mK$) for some $\beta > 0$, exploration parameter $\gamma\in(0,1)$
\STATE instantiate Algorithm \ref{alg:newOGD} with parameters $(\eta, \epsilon)$
\FOR{$t=1\dots T$}
\STATE receive $(\s_t, \tilde{\y}_t)\in\mK\times\ball(0,\alpha{}R)$ from Algorithm \ref{alg:newOGD}
\STATE $b_t \gets \left\{ \begin{array}{ll}
         \textsc{explore} & \mbox{with prob. $\gamma$} \\ 
          \textsc{exploit}& \mbox{with prob. $1-\gamma$}\end{array} \right. $
\IF{$b_t = \textsc{explore}$}
\STATE sample $i_t\in[d]$ uniformly at random
\STATE play $\hat{\s}_t = \q_{i_t}$
\STATE receive loss/payoff $\ell_t = \q_i\cdot\f_t$
\STATE set $\hat{\f}_t \gets \frac{d\ell_t}{\gamma}\Q^{-1}\q_{i_t}$ \COMMENT{recall $\Q = \sum_{i=1}^d\q_i\q_i^{\top}$}
\ELSE
\STATE play $\hat{\s}_t = \s_t$
\STATE receive loss/payoff $\ell_t = \s_t\cdot\f_t$
\STATE set $\hat{\f}_t \gets \textbf{0}$
\ENDIF
\STATE feed $\hat{\f}_t$ to Algorithm \ref{alg:newOGD} as the loss/payoff vector for round $t$
\ENDFOR
\end{algorithmic}
\end{algorithm}

\begin{theorem}\label{thm:newBandit}
Fix $\eta > 0, \epsilon \in (0,~(\alpha+2)R] , \gamma\in(0,1)$. Suppose Algorithm \ref{alg:band} is applied for $T$ rounds and let $\{\f_t\}_{t=1}^T\subseteq\mF$ be the sequence of observed loss/payoff vectors, and let $\{\hat{\s}_t\}_{t=1}^T$ be the sequence of points played by the algorithm.
Then it holds that
\begin{eqnarray*}
\E\left[{\alpharegret}\left({\{(\hat{\s}_t,\f_t)\}_{t\in[T]}}\right)\right] \leq  \frac{\alpha^2R^2}{\eta{}T} + \frac{\eta{}d^2C^2\beta^2}{2\gamma} + 3\epsilon{}F +  \gamma{}C,
\end{eqnarray*}
and the expected number of calls to the approximation oracle of $\mK$ per iteration is upper bounded by
\begin{eqnarray*}
\E\left[{K(\eta, \epsilon, \gamma)}\right] := O\left({\left({1+  \frac{\eta\alpha\beta{}dCR + (\eta{}dC\beta)^2/\gamma}{\epsilon^2}}\right)d^2\ln\left({\frac{(\alpha+1)R}{\epsilon}}\right)}\right).
\end{eqnarray*}
In particular, setting $\eta = \frac{\alpha{}R}{\beta{}dC}T^{-2/3}$, $\epsilon = \alpha{}RT^{-1/3}$, $\gamma = T^{-1/3}$ gives
\begin{eqnarray*}
\E\left[{\alpharegret}\right] = O\left({(\alpha\beta{}dCR+\alpha{}RF + C)T^{-1/3}}\right), \quad \E[K] = O\left({d^2\ln\left({\frac{\alpha+1}{\alpha}T}\right)}\right).
\end{eqnarray*}

\end{theorem}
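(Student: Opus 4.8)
The plan is to reduce Theorem \ref{thm:newBandit} entirely to the full-information guarantee of Theorem \ref{thm:newOGD} applied to the estimated loss/payoff sequence $\{\hat\f_t\}_{t\in[T]}$, using the standard machinery of unbiased estimators built from the barycentric spanner. First I would establish the three basic properties of the estimators $\hat\f_t$ defined in Algorithm \ref{alg:band}. The key one is unbiasedness: conditioned on the past, $\E[\hat\f_t] = \f_t$. This follows because with probability $\gamma$ we explore, sampling $i_t$ uniformly and setting $\hat\f_t = \frac{d\ell_t}{\gamma}\Q^{-1}\q_{i_t}$ with $\ell_t = \q_{i_t}\cdot\f_t$, so that $\E[\hat\f_t] = \gamma\cdot\frac1d\sum_{i=1}^d \frac{d}{\gamma}(\q_i\cdot\f_t)\Q^{-1}\q_i = \Q^{-1}\big(\sum_i\q_i\q_i^\top\big)\f_t = \Q^{-1}\Q\f_t = \f_t$, where the exploit branch contributes $\hat\f_t=\mathbf 0$. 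Second, I would bound the norm: $\Vert\hat\f_t\Vert \le \frac{d|\ell_t|}{\gamma}\Vert\Q^{-1}\q_{i_t}\Vert \le \frac{dC\beta}{\gamma}$, using $|\ell_t|=|\q_{i_t}\cdot\f_t|\le C$ and the spanner bound $\max_i\Vert\Q^{-1}\q_i\Vert\le\beta$. This plays the role of $F$ in Theorem \ref{thm:newOGD}, which is why the $\eta F^2/2$ term becomes $\eta (dC\beta)^2/(2\gamma)$ after also accounting for the $\gamma$ probability of a nonzero estimate.

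Next I would invoke Theorem \ref{thm:newOGD} verbatim on the internal run of Algorithm \ref{alg:newOGD}, which sees exactly the sequence $\{\hat\f_t\}$ and plays $\{\s_t\}$. Taking expectations and using unbiasedness to replace $\E[\hat\f_t]=\f_t$ inside the linear regret expression, the $\alpha$-regret of $\{\s_t\}$ against the true losses $\{\f_t\}$ is controlled by $\frac{\alpha^2R^2}{T\eta} + \frac{\eta}{2}\E[\Vert\hat\f_t\Vert^2] + 3F\epsilon$. The variance term requires a little care: in expectation $\E[\Vert\hat\f_t\Vert^2]\le \gamma\cdot(dC\beta/\gamma)^2 = (dC\beta)^2/\gamma$, which yields the $\frac{\eta d^2C^2\beta^2}{2\gamma}$ term. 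Then I would bound the cost of the exploration rounds themselves: on an \textsc{explore} round the algorithm plays $\hat\s_t=\q_{i_t}\in\mK$ rather than $\s_t$, and each such round changes the loss by at most $C$ in absolute value; since exploration happens with probability $\gamma$, the expected contribution to the per-round $\alpha$-regret of $\{\hat\s_t\}$ versus $\{\s_t\}$ is at most $\gamma C$. Summing these pieces gives the four-term bound in the statement.

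For the oracle complexity, the plan is to reuse the complexity expression $K(\eta,\epsilon)$ from Theorem \ref{thm:newOGD} but with $F$ replaced by the effective gradient magnitude of the fed sequence. The subtlety is that Theorem \ref{thm:newOGD}'s oracle bound was derived from a telescoping/distance argument where the relevant quantity is $\sum_t\Vert\y_{t+1}-\tilde\y_t\Vert^2 = \eta^2\sum_t\Vert\hat\f_t\Vert^2$ together with a linear-in-$\Vert\hat\f_t\Vert$ term; taking expectations, $\E[\Vert\hat\f_t\Vert]\le dC\beta$ (only explore rounds contribute, each bounded by $dC\beta/\gamma$ but occurring with probability $\gamma$) and $\E[\Vert\hat\f_t\Vert^2]\le (dC\beta)^2/\gamma$. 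Substituting these into the telescoping bound from the proof of Theorem \ref{thm:newOGD}, in place of $\eta\alpha RF$ and $\eta^2F^2$, produces the $\eta\alpha\beta dCR$ and $(\eta dC\beta)^2/\gamma$ terms inside $\E[K(\eta,\epsilon,\gamma)]$. The main obstacle, and the step I would be most careful about, is the expectation bookkeeping: the number of Ellipsoid iterations $k_t$ in each projection call is itself a random variable coupled to the random $\hat\f_t$ through the evolving distances $\dist(\tilde\y_t,\conv(\alpha\mK))$, so I must take expectation of the telescoping inequality for $\sum_t k_t$ rather than using a worst-case norm bound, and verify that the filtration argument legitimately lets me substitute $\E[\Vert\hat\f_t\Vert]$ and $\E[\Vert\hat\f_t\Vert^2]$. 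Finally I would plug in $\eta=\frac{\alpha R}{\beta dC}T^{-2/3}$, $\epsilon=\alpha RT^{-1/3}$, $\gamma=T^{-1/3}$ and check termwise that the regret is $O((\alpha\beta dCR+\alpha RF+C)T^{-1/3})$ and that each of the two ratios inside $\E[K]$ collapses to a constant, leaving $\E[K]=O(d^2\ln(\frac{\alpha+1}{\alpha}T))$.
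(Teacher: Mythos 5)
Your proposal is correct and follows essentially the same route as the paper: unbiasedness and the second-moment bound $\E[\Vert\hat\f_t\Vert^2]\le (dC\beta)^2/\gamma$ for the spanner-based estimators, the OGD-without-feasibility regret bound applied to $\{\tilde\y_t\}$ against $\{\hat\f_t\}$ combined with the $3\epsilon F$ projection error and domination by $\E[\s_t]$, the $\gamma C$ exploration cost, and the telescoping distance argument for the oracle count taken in expectation over the explore indicators (which is exactly how the paper resolves the coupling between $k_t$ and the random $\hat\f_t$ that you rightly flag). The only cosmetic difference is that the paper invokes Lemma \ref{lem:ogdRegret} directly rather than Theorem \ref{thm:newOGD} ``verbatim,'' precisely because the $\eta F^2/2$ term must be replaced by the expected squared norm of the estimators — an adjustment you already make.
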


\begin{proof}
The proof is very similar to that of Theorem \ref{thm:newOGD} and we focus on the modifications of it required to prove Theorem \ref{thm:newBandit}.  Again, we focus on the case $\alpha \geq 1$ since the complementary case follows the same lines with the obvious modifications.
Let $\x^*\in\arg\min_{\x\in\mK}\sum_{t=1}^T\x\cdot\f_t$. Applying Lemma \ref{lem:ogdRegret} with respect to the sequence of points $\{\tilde{\y}_t\}_{t=1}^T$ and the sequence of losses $\{\hat{\f}_t\}_{t=1}^T$, we have that
\begin{eqnarray*}
\sum_{t=1}^T\tilde{\y}_t\cdot\hat{\f}_t - \alpha\cdot\sum_{t=1}^T\x^*\cdot\hat{\f}_t \leq  \frac{\alpha^2R^2}{\eta} + \frac{\eta}{2}\sum_{t=1}^T\Vert{\hat{\f}_t}\Vert^2.
\end{eqnarray*}
Taking expectation with respect to the random variables $b_1,i_1,...,b_T,i_T$ and noting that for all $t\in[T]$, both $\x^*$ and $\tilde{\y}_t$ are independent of the randomness in $\hat{\f}_t$, we have that
\begin{eqnarray*}
\E_{\{(b_t,i_t)\}_{t=1}^T}\left[{\sum_{t=1}^T\tilde{\y}_t\cdot\f_t}\right] - \alpha\cdot\sum_{t=1}^T\x^*\cdot\f_t \leq  \frac{\alpha^2R^2}{\eta} + T\frac{\eta{}d^2C^2\beta^2}{2\gamma},
\end{eqnarray*}
were we have used the observations that
\begin{eqnarray*}
\E_{b_t,i_t}[\hat{\f}_t] &=& \gamma\sum_{i=1}^d\frac{1}{d}\cdot\frac{d\q_{i}^{\top}\f_t}{\gamma}\Q^{-1}\q_{i}
= \sum_{i=1}^d\Q^{-1}\q_{i}\q_{i}^{\top}\f_t = \Q^{-1}\Q\f_t = \f_t,
 \\
\E_{b_t}[\Vert{\hat{\f}_t}\Vert^2] &=& \gamma\frac{d^2}{\gamma^2}\ell_t^2\Vert{\Q^{-1}\q_{i_t}}\Vert^2 + (1-\gamma)0\leq \frac{(dC\beta)^2}{\gamma}.
\end{eqnarray*}
As in the proof of Theorem \ref{thm:newOGD}, 
for every iteration $t\geq 1$, let us denote $\p_{t+1} = \sum_{i=1}^{N}a_i\v_i$, $\bar{\s}_t = \sum_{i=1}^{N}a_i\s_i$, where $(a_1,...,a_{N})$, $\{(\v_1,\s_1),...,(\v_{N_t},\s_{N})\}$ are the outputs of the call to Algorithm \ref{alg:approxProj} on that iteration. Also define $\p_1 = \tilde{\y}_1$. Again, by the guarantee of Lemma \ref{lem:approxProj}, we have that
\begin{eqnarray*}
\E_{\{(b_t,i_t)\}_{t=1}^T}\left[{\sum_{t=1}^T\p_t\cdot\f_t}\right] - \alpha\cdot\sum_{t=1}^T\x^*\cdot\f_t \leq  \frac{\alpha^2R^2}{\eta} + T\frac{\eta{}d^2C^2\beta^2}{2\gamma}+ 3T\epsilon{}F.
\end{eqnarray*}
Since $\p_t$ is dominated by $\bar{\s}_t = \E[\s_t]$ for all $t\in[T]$, we have that
\begin{eqnarray*}
\E_{\{(b_t,i_t,\s_t)\}_{t=1}^T}\left[{\sum_{t=1}^T\s_t\cdot\f_t}\right] - \alpha\cdot\sum_{t=1}^T\x^*\cdot\f_t \leq  \frac{\alpha^2R^2}{\eta} + T\frac{\eta{}d^2C^2\beta^2}{2\gamma}+ 3T\epsilon{}F.
\end{eqnarray*}
Finally, since 
\begin{eqnarray*}
\forall t\in[T]: \quad \E_{b_t}[\hat{\s}_t\cdot\f_t] = (1-\gamma)\s_t\cdot\f_t+ \gamma\q_{i_t}\cdot\f_t 
\left\{ \begin{array}{ll}
         \leq \s_t\cdot\f_t + \gamma{}C & \mbox{if $\alpha \geq 1$} \\ 
          \geq \s_t\cdot\f_t - \gamma{}C& \mbox{if $\alpha < 1$}\end{array} \right.,
\end{eqnarray*}
we have that
\begin{eqnarray*}
\E\left[{\sum_{t=1}^T\hat{\s}_t\cdot\f_t}\right] - \alpha\cdot\sum_{t=1}^T\x^*\cdot\f_t \leq  \frac{\alpha^2R^2}{\eta} + T\frac{\eta{}d^2C^2\beta^2}{2\gamma}+ 3T\epsilon{}F +  T\gamma{}C,
\end{eqnarray*}
as required.

We now turn to upper bound the overall number of queries to the approximation oracle of $\mK$. Note that we require to compute a new approximated projection only after rounds for which it holds that $b_t = \textsc{EXPLORE}$, since otherwise it holds that $\hat{\f}_t = \textbf{0}$, and there is no update to the iterates maintained by Algorithm \ref{alg:newOGD}. For any $t\in[T]$ we define the indicator variable:
\begin{eqnarray*}
I_t \gets \left\{ \begin{array}{ll}
         \textsc{1} & \mbox{if $b_t = \textsc{EXPLORE}$} ;\\ 
          \textsc{0}& \mbox{if $b_t = \textsc{EXPLOIT}$.}\end{array} \right.
\end{eqnarray*}

Define $\hat{F} := \frac{dC\beta}{\gamma}$, and observe that for all $t\in[T]$ it holds that 
\begin{eqnarray*}
\Vert{\hat{\f}_t}\Vert \leq \left\|{\frac{d\Q^{-1}\q_{i_t}\ell_t}{\gamma}}\right\| \leq \frac{d}{\gamma}\vert{\q_{i_t}\cdot\f_t}\vert\cdot\Vert{\Q^{-1}\q_{i_t}}\Vert
 \leq \frac{d}{\gamma}C\beta = \hat{F}.
\end{eqnarray*}

Now, we continue to bound the number of calls to Algorithm \ref{alg:approxProj}, very similarly to the analysis in the proof of Theorem \ref{thm:newOGD}.

Let $k_t$ be the number of iterations it took Algorithm \ref{alg:approxProj} to terminate when invoked on iteration $t$ of Algorithm \ref{alg:newOGD} (w.l.o.g. this happens when Algorithm \ref{alg:band} sends the feedback $\hat{\f}_t$ to Algorithm \ref{alg:newOGD}), and note that $\E[K(\eta, \epsilon,\gamma)] = \frac{1}{T}\E\left[{\sum_{t=1}^{T-1}k_t}\right]\cdot{}O\left({d^2\ln\left({\frac{(\alpha+1)R}{\epsilon}}\right)}\right)$. Note that for all $t\geq 1$, $\y_{t+1} = \tilde{\y}_t - I_t\eta\hat{\f}_t$. Thus, by Lemma \ref{lem:approxProj}, it follows that on any iteration $t$, 
\begin{eqnarray*}
\dist^2(\tilde{\y}_{t+1}, \conv(\alpha\mK)) &\leq& \dist^2(\y_{t+1}, \conv(\alpha\mK)) - (k_t-1)\epsilon^2 \\
 &=& \dist^2(\tilde{\y}_{t}-I_t\eta\hat{\f}_{t}, \conv(\alpha\mK)) - (k_t-1)\epsilon^2 \\
&\leq& (\dist(\tilde{\y}_{t}, \conv(\alpha\mK)) + I_t\eta{}\hat{F})^2 - (k_t-1)\epsilon^2 \\
& = &\dist^2(\tilde{\y}_{t}, \conv(\alpha\mK)) + 2I_t\eta{}\hat{F}\dist(\tilde{\y}_{t}, \conv(\alpha\mK)) + I_t\eta^2\hat{F}^2 - k_t\epsilon^2 + \epsilon^2.
\end{eqnarray*}

Rearranging, summing over all iterations $1...T-1$, and recalling that for all $t$, $\dist(\tilde{\y}_{t-1}, \conv(\alpha\mK)) \leq \sqrt{2}\alpha{}R$, we have that
\begin{align*}
\sum_{t=1}^{T-1}k_t \leq \frac{1}{\epsilon^2}\left({\dist^2(\tilde{\y}_1, \conv(\alpha\mK))-\dist^2(\tilde{\y}_T, \conv(\alpha\mK)) + 2\sqrt{2}\sum_{t=1}^{T-1}I_t\eta\alpha\hat{F}R + \sum_{t=1}^{T-1}I_t\eta^2\hat{F}^2+ (T-1)\epsilon^2}\right).
\end{align*}

Taking expectation with respect to the random variables $I_1,...,I_{T-1}$ we have that

\begin{eqnarray*}
\E\left[{\sum_{t=1}^{T-1}k_t}\right] &\leq & (T-1)\left({1+  \frac{2\sqrt{2}\gamma\eta\alpha\hat{F}R +\gamma\eta^2\hat{F}^2}{\epsilon^2}}\right) \\
&=& (T-1)\left({1+  \frac{2\sqrt{2}\eta\alpha\beta{}dCR + (\eta{}dC\beta)^2/\gamma}{\epsilon^2}}\right),
\end{eqnarray*}
as required.

\end{proof}

\section{Open Problems}
It remains open to find algorithms that guarantee both $\tilde{O}(T^{-1/2})$ $\alpha$-regret and $\textrm{poly}(\log{T})$ oracle complexity per iteration (at least on average), for both the full information and bandit settings.

\bibliographystyle{plain}
\bibliography{bib}

\appendix

\section{Proofs Omitted from Section \ref{sec:KKL}}

For clarity, below we present the algorithm from \cite{KKL}, implied by Lemma \ref{lem:KKL:1}, for computing approximated infeasible projections onto $\conv(\alpha\mK)$  in full-detail, see Algorithm \ref{alg:fw}.

\subsection{Proof of Lemma \ref{lem:KKL:1}}
\begin{proof}
To prove the first part of the lemma, suppose that $\x$ satisfies that $(\x-\y_{t+1})\cdot(\x-\v') \leq \epsilon$, where $(\v',\s') \gets \oraclekext(\x-\y_{t+1})$.
Fix $\z\in\conv(\alpha\mK)$. It holds that
\begin{eqnarray*}
\Vert{\y_{t+1}-\z}\Vert^2 &=& \Vert{(\y_{t+1}-\x) + (\x - \z)}\Vert^2 = \Vert{\y_{t+1}-\x}\Vert^2 + \Vert{\x-\z}\Vert^2 -2(\x-\y_{t+1})\cdot(\x-\z) \\
&\geq &\Vert{\x-\z}\Vert^2 - 2(\x-\y_{t+1})\cdot(\x-\z) \\
&\geq & \Vert{\x-\z}\Vert^2 - 2(\x-\y_{t+1})\cdot(\x-\v')
\geq \Vert{\x-\z}\Vert^2 - 2\epsilon,
\end{eqnarray*}
where the second inequality holds since $\v'$ is the output of the extended approximation oracle with respect to the input $(\x-\y_{t+1})$.

For the second part of the lemma, we observe that if $(\x-\y_{t+1})\cdot(\x-\v') > \epsilon$, then
\begin{eqnarray*}
\Vert{\x' - \y_{t+1}}\Vert^2 &=& \Vert{\x - \y_{t+1} + \lambda(\v'-\x)}\Vert^2 \\
&=& \Vert{\x-\y_{t+1}}\Vert^2 - 2\lambda(\x-\y_{t+1})\cdot(\x-\v') + \lambda^2\Vert{\v'-\x}\Vert^2 \\
& \leq & \Vert{\x-\y_{t+1}}\Vert^2 - 2\lambda\epsilon+ 2\lambda^2(\Vert{\v'}\Vert^2 + \Vert{\x}\Vert^2)\\
& \leq & \Vert{\x-\y_{t+1}}\Vert^2 - 2\lambda\epsilon+ 4\lambda^2(\alpha+2)^2R^2,
\end{eqnarray*}
where the first inequality follows since $(\x-\y_{t+1})\cdot(\x-\v') > \epsilon$ and using the triangle inequality with $(a+b)^2 \leq 2(a^2+b^2)$, and the second inequality follows by the assumption on $\x$ and since $\v'$ is the output of the extended approximation oracle.
Thus, we can see that setting $\lambda = \frac{\epsilon}{3(\alpha+2)^2R^2} \in(0,1]$, gives the requested result.

Finally, since $\x$ and $\v'$ are dominated by $\s$ and $\s'$ for any $\f\in\mF$, respectively, we have that $\x' = (1-\lambda)\x+\lambda\v'$ is dominated by $(1-\lambda)\s+\lambda\s'$ for any $\f\in\mF$.
\end{proof}

\subsection{Proof of Lemma \ref{lem:KKL:2}}
\begin{proof}
We begin by proving the regret bound. Since each $\x_{t+1}$ is an approximated projection of $\y_{t+1}$ in the sense that
\begin{eqnarray*}
\forall \z\in\conv(\alpha\mK): \quad \Vert{\z-\x_{t+1}}\Vert^2 \leq \Vert{\z-\y_{t+1}}\Vert^2 + 2\epsilon,
\end{eqnarray*}
it is immediate to see from the proof of Lemma \ref{lem:ogdRegret}, that incorporating this approximation error into the regret bound, and bounding $\Vert{\f_t}\Vert \leq F$ for all $t$, results in the regret bound:
\begin{eqnarray*}
\frac{1}{T}\sum_{t=1}^T\x_t \cdot \f_t - \min_{\x\in\conv(\alpha\mK)}\frac{1}{T}\sum_{t=1}^T\x\cdot \f_t &=&
\frac{1}{T}\sum_{t=1}^T\x_t \cdot \f_t - \alpha\min_{\x\in\mK}\frac{1}{T}\sum_{t=1}^T\x\cdot \f_t \\
&\leq& \frac{\alpha^2R^2}{T\eta} + \frac{\eta{}F^2}{2}  + \frac{\epsilon}{\eta}.
\end{eqnarray*}
The regret bound now follows by recalling that for all $t$ and all $\f\in\mF$: $\E[\s_t\cdot\f_t] = \bar{\s}_t\cdot\f_t \leq \x_t\cdot\f_t$, and taking expectation with respect to the randomness in $\s_t$.

To bound the number of calls to the approximation oracle per some iteration $t$, note that $\Vert{\x_t - \y_{t+1}}\Vert^2 \leq \eta^2F^2$. Thus, if we initialize the projection algorithm, described in Lemma  \ref{lem:KKL:1}, with the point $\x_t$, and we recall that each iteration of the algorithm reduces the potential $\Vert{\x - \y_{t+1}}\Vert^2$ by $\Omega(\epsilon^2)$, where $\x$ is the current iterate, then we have that at most $O(\eta^2F^2/\epsilon^2)$ iterations are required for the algorithm to terminate.
\end{proof}




\begin{algorithm}
\caption{Frank-Wolfe for Approximated (infeasible) Projection onto $\conv(\alpha\mK)$}
\label{alg:fw}
\begin{algorithmic}[1]
\STATE input: point to project $\y\in\reals^d$, error tolerance $\epsilon \in (0,~3(\alpha+2)^2R^2)$
\STATE output: $(\x,\bar{\s})\in\reals^d\times\conv(\mK)$ such that $\x$ is an $\epsilon$-approximated infeasible projection of $\y$ dominated by $\bar{\s}$ for any $\f\in\mF$
\STATE let $(\x_1,\bar{\s}_1)\in\reals^n\times\conv(\mK)$ such that $\x_1$ is dominated by $\bar{\s}_1$ for any $\f\in\mF$.
\STATE $\lambda \gets \epsilon/(3(\alpha+2)^2R^2)$
\FOR{$i= 1...$}
\STATE $(\v_i, \s_i) \gets \oraclekext(\x_i-\y)$
\IF{$(\x_i-\y)\cdot(\x_i-\v_i) \leq \epsilon$}
\RETURN $(\x_i,\bar{\s}_i)$
\ENDIF
\STATE $\x_{i+1} \gets \x_i + \lambda(\v_i - \x_i)$
\STATE $\bar{\s}_{i+1} \gets \bar{\s}_i + \lambda(\s_i-\bar{\s}_i)$
\ENDFOR
\end{algorithmic}
\end{algorithm}




\end{document}